\def\eqref#1{equation~\ref{#1}}
\def\1{\bm{1}}
\def\rvw{{\mathbf{w}}}
\def\rvx{{\mathbf{x}}}
\DeclareMathAlphabet{\mathsfit}{\encodingdefault}{\sfdefault}{m}{sl}
\SetMathAlphabet{\mathsfit}{bold}{\encodingdefault}{\sfdefault}{bx}{n}
\DeclareMathOperator*{\argmax}{arg\,max}
\DeclareMathOperator*{\argmin}{arg\,min}
\newcommand{\modelname}{FDRL}
\algrenewcommand\algorithmicindent{0.5em}
\ificcvfinal\pagestyle{empty}\fi
\begin{document}

\title{Generative Modeling with Flow-Guided Density Ratio Learning}

\author{\large Alvin Heng\textsuperscript{1,*}, \large Abdul Fatir Ansari\textsuperscript{1,2,$\dagger$}, \large Harold Soh\textsuperscript{1,3}\\
\normalsize \textsuperscript{1}Dept. of Computer Science, National University of Singapore\\
\normalsize \textsuperscript{2}AWS AI Labs\\
\normalsize \textsuperscript{3}Smart Systems Institute, National University of Singapore
}

\maketitle
\ificcvfinal\thispagestyle{empty}\fi

\begin{abstract}
We present Flow-Guided Density Ratio Learning (\modelname{}), a simple and scalable approach to generative modeling which builds on the stale (time-independent) approximation of the gradient flow of entropy-regularized $f$-divergences introduced in recent work. Specifically, the intractable time-dependent density ratio is approximated by a stale estimator given by a GAN discriminator. This is sufficient in the case of sample refinement, where the source and target distributions of the flow are close to each other. However, this assumption is invalid for generation and a naive application of the stale estimator fails due to the large chasm between the two distributions. \modelname{} proposes to train a density ratio estimator such that it learns from progressively improving samples during the training process. We show that this simple method alleviates the density chasm problem, allowing \modelname{} to generate images of dimensions as high as $128\times128$, as well as outperform existing gradient flow baselines on quantitative benchmarks. We also show the flexibility of \modelname{} with two use cases. First, unconditional \modelname{} can be easily composed with external classifiers to perform class-conditional generation. Second, \modelname{} can be directly applied to unpaired image-to-image translation with no modifications needed to the framework. Our code is publicly available at \href{https://github.com/clear-nus/fdrl}{\texttt{https://github.com/clear-nus/fdrl}}.

\end{abstract}

\footnotetext[1]{
Correspondence at: \texttt{\{alvinh,harold\}@comp.nus.edu.sg}}
\footnotetext[2]{
Work done while at the National University of Singapore, prior to
joining Amazon.}

\section{Introduction}

\begin{figure*}
    \centering
    \includegraphics[width=\textwidth]{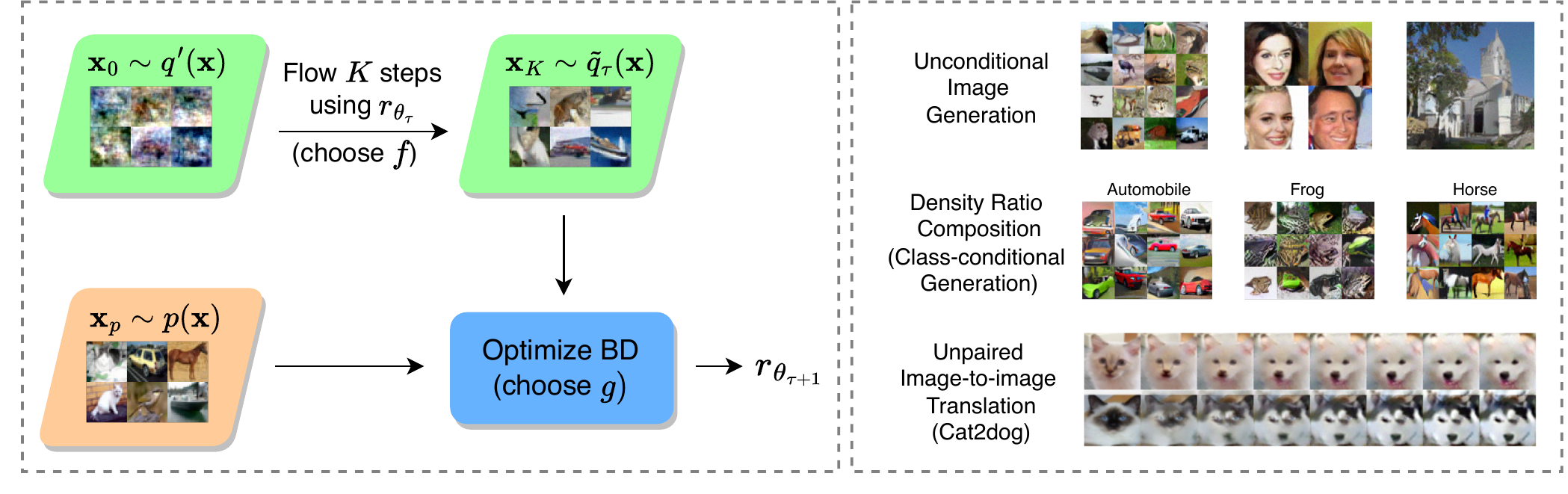}
    \caption{Left: Illustration of \modelname{}'s training setup for the $\tau$ training iteration. For clarity, we emphasize the choices of the $f$-divergence and $g$, the Bregman divergence function, as part of training. Right: the various applications of \modelname{}, ranging from unconditional image generation to class-conditional generation by composition with external classifiers and unpaired image-to-image translation. }
    \label{fig:main}
\end{figure*}

Deep generative modeling has made significant strides in recent years. Advances in techniques such as generative adversarial networks~\cite{goodfellow2014generative} and diffusion models~\cite{ho2020denoising, song2020score} have enabled the generation of high-quality, photorealistic images and videos. Recently, gradient flows have emerged as an interesting alternative approach to generative modeling. Unlike the aforementioned techniques, gradient flows have a unique interpretation where the objective is to find the path of steepest descent between two distributions. 
Further, where diffusion models typically require a tractable terminal distribution such as Gaussian, the source and target distributions in gradient flows can be varied for different tasks in generative modeling. For instance, the gradient flow performs data synthesis when the source distribution is a simple prior and the target is the data distribution. When the two distributions are from different domains of the same modality, the gradient flow performs domain translation. 

Among gradient flow methods, Wasserstein gradient flows (WGF) have recently become a popular specialization for a variety of applications \cite{mokrov2021large, fan2021variational, ansari2020refining, gao2022deep}. WGFs model the gradient dynamics on the space of probability measures with respect to the Wasserstein metric; these methods aim to construct the optimal path between two probability measures --- a source distribution $q(\rvx)$ and a target distribution $p(\rvx)$, where the notion of optimality refers to the path of steepest descent in Wasserstein space. 

Despite the immense potential of gradient flow methods in generative modeling~\cite{ansari2020refining, gao2022deep, fan2021variational}, applications to high-dimensional image synthesis remain limited, and its applications to other domains, such as class-conditional generation and image-to-image translation have been underexplored. A key difficulty is that existing methods to solve for the gradient flow often involve complex approximation schemes. For instance, the JKO scheme~\cite{jordan1998variational} necessitates estimation of the 2-Wasserstein distance and divergence functionals, while
particle simulation methods, which involve simulating an equivalent stochastic differential equation (SDE), requires solving for the time-dependent marginal distribution over the flow.

A recent work on adopting gradient flows for sample refinement, DG$f$low \cite{ansari2020refining}, proposes a stale (time-independent) estimate given by the discriminator of a pretrained GAN, as the density ratio estimate necessary in the corresponding SDE. While this dramatically simplifies the particle simulation approach, in practice this only works when $q(\rvx)$ and $p(\rvx)$ are close together, such as between the GAN generated images and the true data distribution. When the two distributions are far apart, such as the case of image synthesis, where $q(\rvx)$ is a simple prior distribution, estimating the density ratio $q(\rvx)/p(\rvx)$ becomes a trivial problem due to the large density chasm \cite{rhodes2020telescoping}, and use of this naive stale estimator in simulating the SDE fails to generate realistic images.

We therefore pose the question: \emph{is it possible to adopt a simple gradient flow scheme for image generation driven by density ratio learning?} We answer this in the affirmative by proposing Flow-Guided Density Ratio Learning (FDRL), a new training approach to address the problem of estimating the density ratio between the prior and the data distribution, by progressively training a density ratio estimator with evolving samples. FDRL operates exclusively in the data space, and does not require training additional generators, unlike related particle methods \cite{gao2022deep}. Compared to gradient flow baselines, we achieve the best quantitative scores in image synthesis and are the first to scale to image dimensions as high as $128 \times 128$. In addition, we show that our framework can be seamlessly applied to two other critical tasks in generative modeling: class-conditional generation and unpaired image-to-image translation, which have been under-explored in the context of gradient flow methods. 

\section{Background}
We provide a brief overview of gradient flows and density ratio estimation. For a more comprehensive introduction to gradient flows, please refer to~\cite{santambrogio2017euclidean}. A thorough overview of density ratio estimation can be found in~\cite{sugiyama2012density}.

\paragraph{Wasserstein Gradient Flows.}
Consider Euclidean space equipped with the familiar $L_2$ distance metric $(\mathcal{X}, \|\cdot\|_2)$. Given a function $ F: \mathcal{X} \rightarrow \mathbb{R}$, the curve $\{\rvx (t)\}_{t\in\mathbb{R}^+}$ that follows the direction of steepest descent is called the \textit{gradient flow} of $F$:
\begin{equation}\label{eq:gf_euclidean}
    \rvx'(t) = -\nabla F(\rvx(t)).
\end{equation}
In generative modeling, we are interested in sampling from the underlying \textit{probability distribution} of a given dataset. Hence, instead of Euclidean space, we consider the space of \textit{probability measures} with finite second moments equipped with the 2-Wasserstein metric $(\mathcal{P}_2(\Omega), \mathcal{W}_2)$. Given a functional $\mathcal{F} : \mathcal{P}_2(\Omega) \rightarrow \mathbb{R}$ in the 2-Wasserstein space, the gradient flow of $\mathcal{F}$ is the steepest descent curve of $\mathcal{F}$. Such curves are called Wasserstein gradient flows.

\paragraph{Density Ratio Estimation via Bregman Divergence.}
Let $q(\rvx)$ and $p(\rvx)$ be two distributions over $\mathcal{X} \in \mathbb{R}^d$ where we have access to i.i.d samples $\rvx_q \sim q(\rvx)$ and $\rvx_p \sim p(\rvx)$. The goal of density ratio estimation (DRE) is to estimate the true density ratio $r^*(\rvx) = \frac{q(\rvx)}{p(\rvx)}$ based on samples $\rvx_q$ and $\rvx_p$.

We will focus on density ratio fitting under the Bregman divergence (BD), which is a framework that unifies many existing DRE techniques~\cite{sugiyama2012density, sugiyama2012densityratio}. Let $g : \mathbb{R}^+ \rightarrow \mathbb{R}$ be a twice continuously differentiable convex function with a bounded derivative. The BD seeks to quantify the discrepancy between the estimated density ratio $r_\theta$ and the true density ratio $r^*$:
\begin{multline}\label{eq:bd_true}
    BD_g(r^*||r_\theta) = \mathbb{E}_{p}[g(r^*(\rvx)) - g(r_\theta(\rvx)) + \partial g(r_\theta(\rvx))r_\theta(\rvx)] \\ - \mathbb{E}_{q} [ \partial g(r_\theta(\rvx))].
\end{multline}
In practice, we estimate the expectations in Eq. \ref{eq:bd_true} using Monte Carlo samples. We can also drop the term $\mathbb{E}_p[g(r^*(\rvx))]$ during optimization as it does not depend on the model $r_\theta$.
The minimizer of Eq. \ref{eq:bd_true}, which we denote $\theta^*$, satifies $r_{\theta^*}(x)=r^*(\rvx)=q(\rvx)/p(\rvx)$. Moving forward, we will use the hatted symbol $\mathbb{\widehat{E}}_p$ to refer to Monte Carlo estimates for ease of notation. We consider two common instances of $g$ in this paper, namely $g(y)=\frac{1}{2}(y-1)^2$ and $g(y)=y\log y - (1+y)\log(1+y)$, which correspond to the Least-Squares Importance Fitting (LSIF) and Logistic Regression (LR) objectives. We provide the full form of the BD for these choices of $g$ in Appendix \ref{sec:app_pairing}. 

\section{Discriminator Gradient Flow}
Ansari et al. propose DG$f$low~\cite{ansari2020refining} for sample refinement by simulating the SDE:
\begin{equation}\label{eq:sde}
    d\rvx_t = -\nabla_\rvx f'(q_t(\rvx_t)/p(\rvx_t))dt + \sqrt{2\gamma}d\rvw_t,
\end{equation}
where $d\rvw_t$ denotes the standard Wiener process. $p(\rvx)$ represents the target distribution of refinement, while $q_t(\rvx)$ represents the time-evolved source distribution $q_0(\rvx)$. In theory, as $t \rightarrow \infty$, the stationary distribution of the particles $\rvx_t$ from Eq.~\ref{eq:sde} converges to the target distribution $p(\rvx)$. The SDE is the equivalent particle system of the corresponding Fokker-Planck equation, which is the gradient flow of the functional 
\begin{equation}\label{eq:erfd}
    \mathcal{F}^f_p(q) = \int p(\rvx) f(q(\rvx)/p(\rvx)) d\rvx + \gamma \int q(\rvx) \log q(\rvx)d\rvx,
\end{equation}
where $f : \mathbb{R}^+ \rightarrow \mathbb{R}$ is a twice-differentiable convex function with $f(1)=0$. Eq.~\ref{eq:sde} can therefore be understood as the flow which minimizes Eq.~\ref{eq:erfd}. The first term in Eq.~\ref{eq:erfd} is the $f$-divergence, which measures the discrepancy between $q(\rvx)$ and $p(\rvx)$. Popular $f$-divergences include the Kullback-Leibler (KL), Pearson-$\chi^2$ divergence and Jensen-Shannon (JS) divergence. Meanwhile, the second term is a differential entropy term that improves expressiveness of the flow. We list the explicit forms of $f$ that will be considered in this work in Table \ref{tab:f-div} in the Appendix.

In DG$f$low, the source distribution is the distribution of samples from the GAN generator, while the target distribution is the true data distribution. The time-varying density ratio $q_t(\rvx)/p(\rvx)$ is approximated by a stale estimate $r_\theta(\rvx)$
\begin{equation}\label{eq:sde_stale}
    d\rvx_t = -\nabla_\rvx f'(r_\theta(\rvx_t))dt + \sqrt{2\gamma}d\rvw_t,
\end{equation}
where $r_\theta$ is represented by the GAN discriminator. The assumption underlying this approach is that the distribution of generated samples from the GAN is sufficiently close to the data distribution, so that the density ratio $q_t(\rvx)/p(\rvx)$ is approximately constant throughout the flow.

As we are unable to compute the marginal distributions $q_t(\rvx)$ or estimate the time-dependent density ratio $q_t(\rvx)/p(\rvx)$ in Eq.~\ref{eq:sde} a priori, we are motivated to extend the stale formulation of DG$f$low from refinement to the more difficult problem of synthesis, i.e., generating samples from noise. 
We show that in such a scenario, the stationary distribution of Eq.~\ref{eq:sde_stale} shares the same MLE as the data distribution. 
\begin{restatable}{lemma}{mainlemma}\label{thm:main}
Let $\gamma=1$ and assume that $q(\rvx)\sim U[a,b]$, where $a,b$ are chosen appropriately (e.g., [-1,1] for image pixels). Then the stationary distribution of Eq.~\ref{eq:sde_stale}, $\rho_\infty(\rvx)$, has the same maximum likelihood estimate as $p(\rvx)$, $\argmax_\rvx \rho_\infty(\rvx) = \argmax_\rvx p(\rvx)$.
\end{restatable}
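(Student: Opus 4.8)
The plan is to identify the stationary law $\rho_\infty$ of the stale SDE in Eq.~\ref{eq:sde_stale} with an explicit Gibbs (Boltzmann) measure, substitute the converged density ratio, and then use the uniformity of $q$ to collapse the statement to an elementary monotonicity argument about the convex generator $f$. First I would observe that in Eq.~\ref{eq:sde_stale} the drift is the gradient of a \emph{fixed} potential $V(\rvx) := f'(r_\theta(\rvx))$, so the dynamics is an overdamped Langevin diffusion at temperature $\gamma$. Writing the associated Fokker--Planck equation $\partial_t \rho_t = \nabla\cdot(\rho_t \nabla V) + \gamma\,\Delta\rho_t$ and imposing the zero-flux stationarity condition on the pixel domain $[a,b]^d$ with reflecting boundaries gives $\nabla\log\rho_\infty = -\nabla V/\gamma$, hence $\rho_\infty(\rvx) \propto \exp\!\big(-f'(r_\theta(\rvx))/\gamma\big)$.

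Next I would invoke the property stated in the Background section that the minimizer of the Bregman divergence in Eq.~\ref{eq:bd_true} satisfies $r_\theta(\rvx) = r^*(\rvx) = q(\rvx)/p(\rvx)$, so that $\rho_\infty$ is expressed directly in terms of the target $p$. Since $q = U[a,b]$ is a constant $c$ on $[a,b]^d$, we get $r^*(\rvx) = c/p(\rvx)$, and with $\gamma = 1$ this yields $\rho_\infty(\rvx) \propto \exp\!\big(-f'\!\big(c/p(\rvx)\big)\big)$.

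Finally I would take the argmax: because $\exp$ is strictly increasing, $\argmax_\rvx \rho_\infty(\rvx) = \argmin_\rvx f'\!\big(c/p(\rvx)\big)$; convexity of $f$ makes $f'$ non-decreasing (and strictly increasing for the $f$-divergences in Table~\ref{tab:f-div}, e.g. $f'(u)=\log u + 1$ for KL and $f'(u)=2(u-1)$ for Pearson-$\chi^2$), while $u \mapsto c/u$ is strictly decreasing, so the composition is minimized exactly where $p(\rvx)$ is maximized. Therefore $\argmax_\rvx \rho_\infty(\rvx) = \argmax_\rvx p(\rvx)$, which is the claim.

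I expect the monotonicity step to be the easy part; the technical care sits in the first step, where one must justify that $V = f'(r_\theta)$ is a legitimate confining potential so a normalizable, unique stationary density exists — this is precisely why restricting to the compact domain $[a,b]^d$ (consistent with the uniform-$q$ assumption) with reflecting boundaries is the right setting, avoiding the need for growth conditions on $f'$ at infinity. A secondary caveat, worth flagging rather than proving, is that the argument assumes the estimator has genuinely attained $r^* = q/p$; the failure of the naive stale estimator to do so across a large density chasm is exactly what the subsequent FDRL training scheme is designed to address, so this lemma should be read as the idealized target that the method aims for.
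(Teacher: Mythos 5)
Your proposal is correct and follows essentially the same route as the paper's proof: identify the stale SDE as overdamped Langevin dynamics with fixed potential $f'(r_\theta(\rvx))$, read off the Gibbs stationary density $\rho_\infty \propto e^{-f'(C/p(\rvx))}$ from the Fokker--Planck equation, and conclude by monotonicity of $\exp$, $f'$, and $u \mapsto C/u$. Your added care about normalizability on the compact domain and about needing strict (not just weak) monotonicity of $f'$ for the $\argmin$ step are sensible refinements of points the paper passes over, but they do not change the argument.
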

Based on Lemma~\ref{thm:main}, we hypothesize that samples drawn from Eq.~\ref{eq:sde_stale} have high likelihoods under the data distribution. This motivates us to train a stale estimator $r_\theta(\rvx)$ where $q(\rvx)$ is a simple prior and $p(\rvx)$ is the true data distribution.

\subsection{Where The Stale Estimator Breaks Down}\label{sec:stale_breaks}

We thus wish to modify the source distribution such that $q(\rvx)$ is a simple prior, such as a Gaussian, while keeping $p(\rvx)$ to be the true data distribution. Given samples from $q(\rvx)$ and $p(\rvx)$, we can leverage the Bregman divergence to train a stale estimator and simulate Eq.~\ref{eq:sde_stale}. However, we observe empirically that this fails catastrophically as the density ratio estimation problem becomes trivial and the Bregman loss diverges. We convey the key intuitions with a toy example of simple 2D Gaussians.

Consider an SDE of the form 
\begin{equation}
    d\rvx_t = -\nabla_\rvx f'(q(\rvx_t)/p(\rvx_t))dt + \sqrt{2\gamma}d\rvw_t,
\end{equation}
where $q(\rvx)$ and $p(\rvx)$ are both simple 2D Gaussian densities. This differs from Eq.~\ref{eq:sde} as the distribution $q(\rvx_t)$ is time-independent, and fixed as the source Gaussian distribution. We would like to investigate how such an SDE, with $q/p$ represented by a stale estimator $r_\theta$, behaves in transporting particles from $q(\rvx)$ to $p(\rvx)$.

Although we have access to the analytical form of the density ratio $q(\rvx)/p(\rvx)$ in this example, we choose to train a multilayer perceptron (MLP) to estimate the density ratio using the Bregman divergence Eq. \ref{eq:bd_true}. This is to evaluate the general case where we may only have samples and not the analytic densities. We investigate two scenarios by varying the distance between $q$ and $p$, as measured by their KL divergence (or more generally, the $f$-divergences). $q(\rvx)$ is fixed at  $\mathcal{N}(\mathbf{0}, 0.1\mathbb{I})$, where bold numbers represent 2D vectors. We set $p(\rvx) \sim \mathcal{N}(\mathbf{1}, 0.1\mathbb{I})$ in the first case and $p(\rvx) \sim \mathcal{N}(\mathbf{6}, 0.1\mathbb{I})$ in the second. 

We first train separate MLPs to estimate the density ratio, using $\rvx_q$ and $\rvx_p$ drawn from the respective distributions. After convergence, we simulate a finite number of steps of Eq.~\ref{eq:sde_stale} on samples drawn from $q(\rvx)$, with the density ratio given by the MLP. We characterize the transition of particles through Eq.~\ref{eq:sde_stale} as parameterized by a generalized kernel
\begin{equation}\label{eq:kernel}
    \Tilde{q}(\rvx) = \int q(\rvx') M_\theta(\rvx|\rvx') d\rvx',
\end{equation}
where $M_\theta$ is the transition kernel parameterized by the MLP.
The results are shown in Fig.~\ref{fig:stale_exps}. In (a), where the two distributions are close together, the particles from $q(\rvx)$ are successfully transported to $p(\rvx)$. Meanwhile in (b), even after flowing for many more steps, the particles have not converged to the target distribution. Flowing for more steps in (b) results in the particles drifting even further from the target (and out of the image frame). This suggests that the stale estimator is only appropriate in the case where the two distributions are not too far apart, which indeed corresponds to the assumption of DG$f$low that the GAN generated images are already close to the data distribution.

This observation can be attributed to two complementary explanations. Firstly, the Bregman divergence is optimized using samples in a Monte Carlo fashion; when the two distributions are separated by large regions of vanishing likelihood, there are insufficient samples in these regions for the model to learn an accurate estimate of the density ratio. Consequently, the inaccurate estimate causes the particles to drift away from the correct direction of the target as it crosses these regions, as seen in Fig.~\ref{fig:stale_no_works}.

The second explanation may be attributed to the density chasm problem~\cite{rhodes2020telescoping}. When two distributions are far apart, a binary classifier (which is equivalent to a density ratio estimator, see appendix Sec.~\ref{app:classifier}) can obtain near perfect accuracy while learning a relatively poor estimate of the density ratio. For example, in the second case above, the neural network can trivially separate the two modes with a multitude of straight lines, thus failing to estimate the density ratio accurately. 

\begin{figure}
     \centering
     \begin{subfigure}[b]{0.47\columnwidth}
         \includegraphics[width=\linewidth,trim={0cm 0cm 1cm 1cm},clip]{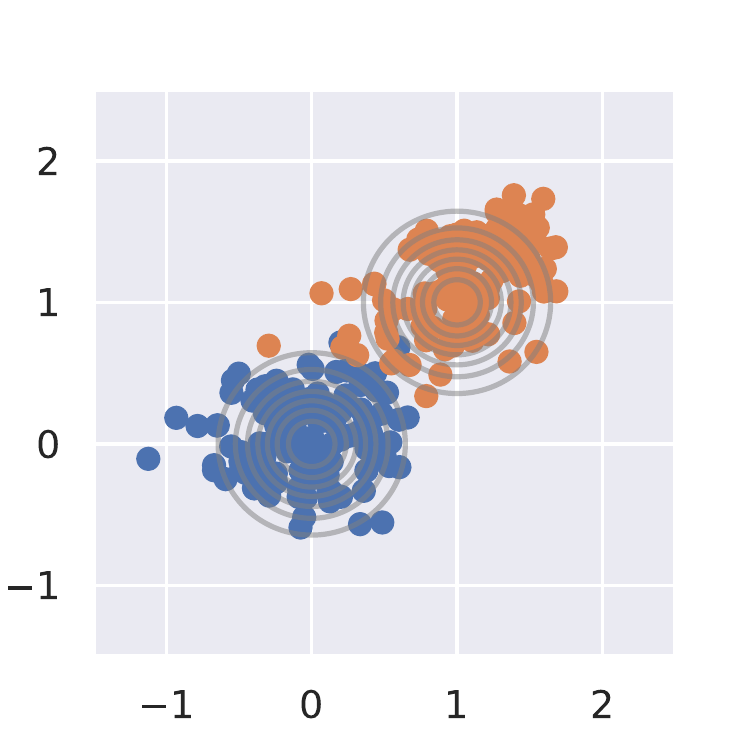}
         \caption{}
         \label{fig:stale_works}
     \end{subfigure}
     \hfill
     \begin{subfigure}[b]{0.47\columnwidth}
        \includegraphics[width=\linewidth,trim={0cm 0cm 1cm 1cm},clip]{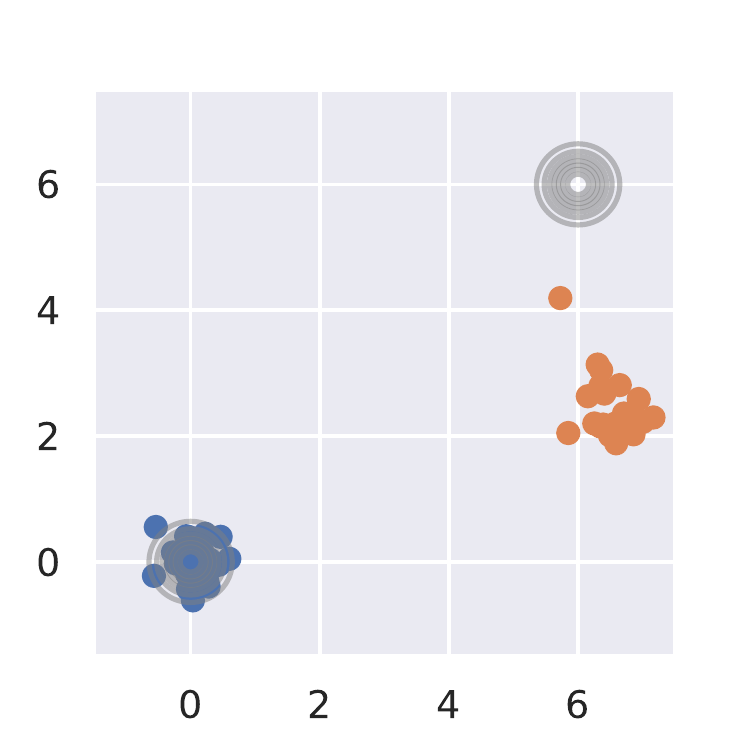}
         \caption{}
         \label{fig:stale_no_works}
     \end{subfigure}
        \caption{Toy experiments by simulating the flow Eq.~\ref{eq:sde_stale} with an MLP density ratio estimator. The source distribution $q(\rvx)$ is set as $\mathcal{N}(\mathbf{0}, 0.1\mathbb{I})$. Blue particles represent the source particles, and orange particles represent the same particles after flowing for $K$ steps. (a) We set $p(\rvx) \sim \mathcal{N}(\mathbf{1}, 0.1\mathbb{I})$ and flow for $K=15$ steps. (b) We set $p(\rvx) \sim \mathcal{N}(\mathbf{6}, 0.1\mathbb{I})$ and flow for $K=400$ steps. We can clearly see that particles in (a) have converged to the target distribution, while particles in (b) have not, demonstrating the density chasm problem.}
        \label{fig:stale_exps}
\end{figure}

The eventual goal of this work is the generative modeling of images, where we set our source distribution to be a simple prior distribution, and the target distribution to be the distribution of natural images. In such cases involving much higher-dimensional distributions, the density chasm problem will be dramatically more pronounced, causing the stale estimator to fail to generate realistic images. As mentioned previously, in early experiments where we attempt to learn a density ratio estimator with $q(\rvx)$ being a diagonal Gaussian or uniform prior and $p(\rvx)$ represented by samples of natural images (e.g. CIFAR10), the Bregman loss diverges early during training, leading to failed image samples.

\subsection{Data-Dependent Priors}\label{sec:ddp}
One approach to address the density chasm problem for images is to devise a prior $q(\rvx)$ that is as close to the target distribution as possible, while remaining easy to sample from. Inspired by generation from seed distributions with robust classifiers~\cite{santurkar2019image}, we first adopted a data-dependent prior (DDP) by fitting a multivariate Gaussian to the training dataset:
\begin{multline}
q(\rvx) = \mathcal{N}(\bm{\mu}_D, \bm{\Sigma}_D), \textrm{where} \\ \bm{\mu}_D = \mathbb{E}_D[\rvx], \ \ \ \bm{\Sigma}_D = \mathbb{E}_D[(\rvx-\bm{\mu}_D]^T (\rvx-\bm{\mu}_D)]
\end{multline}
where subscript $D$ represents the dataset. Samples from this distribution contain low frequency features of the dataset, such as common colors and outlines, which we visualize in Fig. \ref{fig:learned_prior} of the appendix. We proceeded to train a CNN with the Bregman divergence in the same manner as the toy experiments, with $\rvx_q$ drawn from the DDP and $\rvx_p$ from the dataset. However, we found that the Bregman loss still diverged early during training --- the DDP alone is insufficient to bridge the density chasm. In the next section, we describe a novel approach to cross the chasm using \emph{flow-guided} training, which when used with DDP achieves significantly better generative performance.

\section{Flow-Guided Density Ratio Learning}
 We propose FDRL, a training scheme that is able to scale to the domain of high-dimensional images, while retaining the simplicity of the stale formulation of DG$f$low. Instead of merely training $r_\theta$ to estimate $q(\rvx)/p(\rvx)$, our method proposes to estimate $\tilde{q}(\rvx)/p(\rvx)$ by flowing samples from $q(\rvx)$ at each training step with Eq.~\ref{eq:sde_stale}. This approch is based on observations that $\tilde{q}(\rvx)$ progressively approaches the target distribution as $r_\theta$ is being trained. In this formulation, $\tilde{q}(\rvx)$ is no longer static but changes with $r_\theta$ as $\theta$ is updated. Therefore, unlike the toy examples of Sec. \ref{sec:stale_breaks}, the density ratio estimator is being trained on samples that improve with training.

We formulate our training setup as follows. Consider a given training step $\tau$, where our model has parameters $\theta_{\tau}$ from the gradient update of the previous iteration. We switch notations slightly and denote $q'(\rvx)$ as our source distribution, which is a simple prior (such as the DDP). We propose to first draw samples $\rvx_0 \sim q'(\rvx)$, and simulate Eq. \ref{eq:sde_stale} for $K$ steps with the stale estimator $r_{\theta_{\tau}}$ using the Euler-Maruyama discretization:
\begin{equation}\label{eq:sde_euler_maruyama}
    \rvx_{k+1} = -\eta \nabla_\rvx f'(r_{\theta_{\tau}})(\rvx_k) + \nu \bm{\xi}_{k}
\end{equation}
where $\bm{\xi}_{k} \sim \mathcal{N}(0,I)$, $\eta$ is the step size and $k \in [0,K-1]$. We label the resultant particles as $\rvx_K$. The $\rvx_K$ are drawn from the distribution $\Tilde{q}_\tau(\rvx_K)$ given by
\begin{equation}\label{eq:transition_kernel}
    \Tilde{q}_\tau(\rvx_K) = \int q'(\rvx')M_{\theta_{\tau}}(\rvx_K|\rvx')d\rvx
\end{equation}
where $M_{\theta_{\tau}}(\rvx_K|\rvx)$ is the transition kernel of simulating Eq.~\ref{eq:sde_euler_maruyama} with parameters $\theta_{\tau}$. We optimize $r_{\theta_\tau}(\rvx)$ for the $\tau$ training iteration using the Bregman divergence 
\begin{equation}\label{eq:bd_t_n}
    \mathcal{L}(\theta_\tau) = \mathbb{\widehat{E}}_p[\partial g(r_{\theta_
    \tau}(\rvx))r(\rvx) - g(r_{\theta_\tau}(\rvx)) ] - \mathbb{\widehat{E}}_{\Tilde{q}_\tau} [ \partial g(r_{\theta_\tau}(\rvx))]
\end{equation}
where expectation over $p(\rvx)$ can be estimated using samples from the dataset. We update the parameters for the next training step $\tau+1$ as $\theta_{\tau+1} \leftarrow \theta_\tau - \alpha \nabla_\theta \mathcal{L}(\theta_\tau)$ with learning rate $\alpha$. The process repeats until convergence. We provide pseudocode in Algorithm \ref{alg:training}, and an illustration of the training setup in Fig.~\ref{fig:main}. %
We illustrate our training process in Fig. \ref{fig:toy_example}, where we utilize the toy example from Fig. \ref{fig:stale_no_works} which the stale estimator $r_\theta = q'/p$ failed previously. We can see from the top row that as $\tau$ increases, the orange samples from $\Tilde{q}_\tau$, which are obtained by flowing the blue samples from $q'$ for fixed $K$ steps, progressively approach the target distribution. In the bottom row, we plot the progression of the mean of $\Tilde{q}_\tau$ as training progresses, which shows the iterative approach towards the target distribution. Together with the experiments for high-dimensional image synthesis in Sec.~\ref{sec:exp}, these results empirically justify that \modelname{} indeed overcomes the density chasm problem, allowing the stale estimator $r_\theta(\rvx)$ to be used for sample generation.

\begin{figure}
    \centering
    \includegraphics[width=\linewidth, trim={1cm 1cm 1cm 1.5cm},clip]{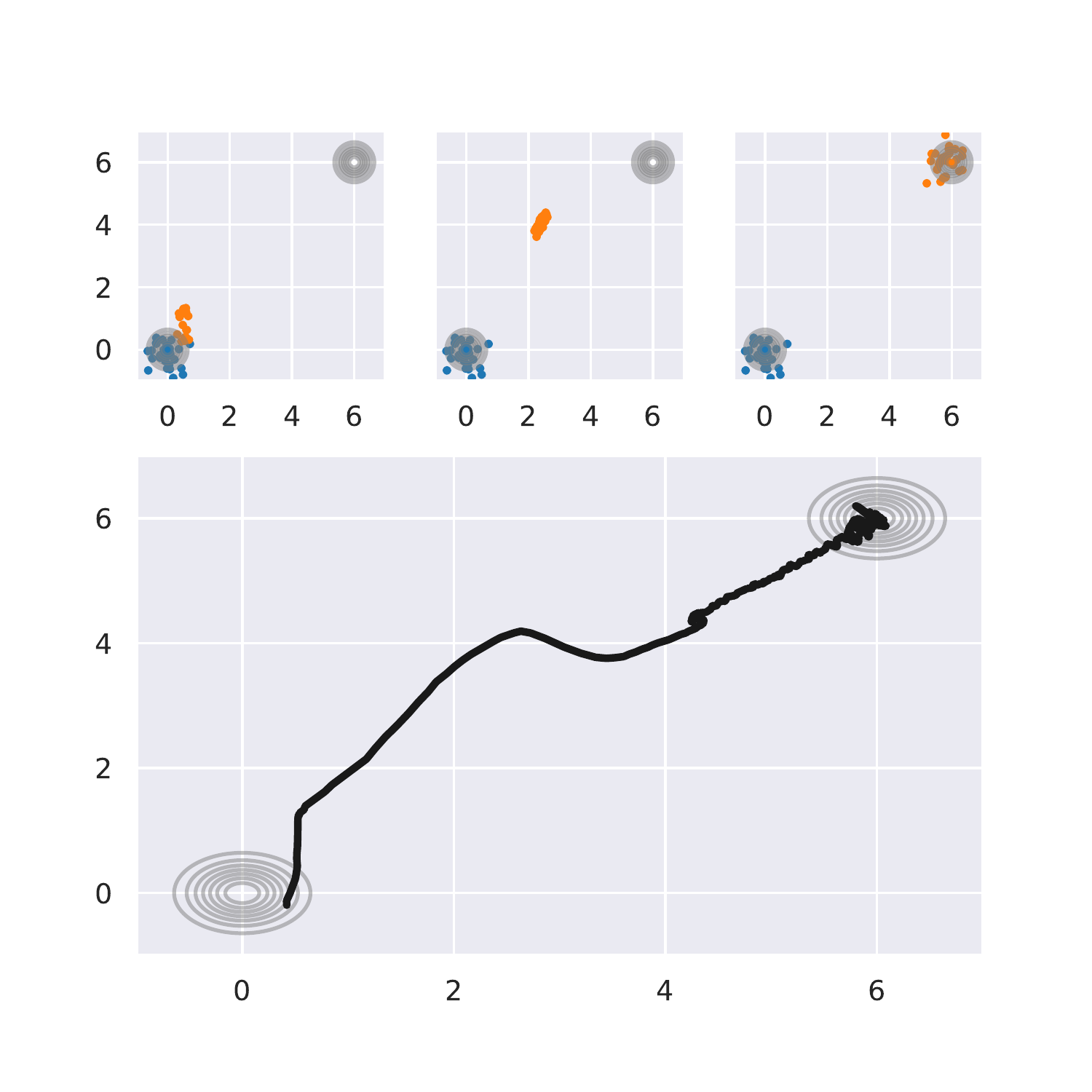}
    \caption{Plot of $\Tilde{q}_\tau$ as training progresses in the toy example of Fig. \ref{fig:stale_no_works}. The total number of training steps is $T=1000$, such that $\tau \in [0, T-1]$. In the top row, blue particles are samples from $q'$ while orange particles are samples from $\Tilde{q}_\tau$ at specific training steps. Left: $\tau = 10$, center: $\tau=40$, right: $\tau=1000$. In the bottom row, we plot the trajectory of the mean of $\Tilde{q}_\tau$ as training progresses.}
    \label{fig:toy_example}
\end{figure}

\subsection{Convergence Distribution of \modelname{}}\label{sec:conv}
In this section, we analyze the convergence distribution after training has converged. With $T$ training steps and $\tau \in [0, T-1]$, let us consider what happens during the final training iteration $T-1$. The model has parameters $\theta_{T-1}$ from the gradient update of the previous iteration. Following Algorithm \ref{alg:training}, we first sample $\rvx_K$ from the distribution $\Tilde{q}_{T-1}(\rvx_K) = \int q'(\rvx')M_{\theta_{T-1}}(\rvx_K|\rvx')d\rvx'$ by running $K$ steps of Eq. \ref{eq:sde_euler_maruyama}. With samples $\rvx_K \sim \Tilde{q}_{T-1}(\rvx)$ and $\rvx_p \sim p(\rvx)$, we optimize the BD and take the gradient step $\theta_{T} \leftarrow \theta_{T-1} - \alpha \nabla_\theta \mathcal{L}$. Assuming convergence, this means by definition that the loss gradient of the final step $\nabla_\theta \mathcal{L} = 0$. Thus, with no further gradient updates possible, our model is parameterized by $\theta_{T}$ after convergence. We can now analyze what happens when sampling from our trained model with parameters $\theta_{T}$.

In our framework, we sample from our trained model by flowing for a total of $K + \kappa$ steps. This sampling can be viewed as two stages ($K$ ``bridging'' steps, followed by $\kappa$ refinement steps). To elaborate, in the first stage, we sample as before by running Eq. \ref{eq:sde_euler_maruyama} for $K$ steps using our trained model $r_{\theta_{T}}$. The samples from stage 1 will be drawn from  $\Tilde{q}_{T}(\rvx_K) = \int q'(\rvx')M_{\theta_{T}}(\rvx_K|\rvx')d\rvx'$. Based on empirical evidence from our toy example in Fig.~\ref{fig:toy_example} and image experiments in Sec.~\ref{sec:image_gen}, $\Tilde{q}_{T}(\rvx_K)$ is closer to the data distribution $p(\rvx)$. Interestingly, we can introduce a second refinement stage. Recall that convergence in the Bregman divergence, $\nabla_\theta \mathcal{L} = 0$, implies that our density ratio estimator has converged to the value $r_{\theta_{T}}(\rvx) = \Tilde{q}_{T-1} / p$. Since after the first stage, we have samples from $\Tilde{q}_{T}(\rvx)$, and an estimator $r_{\theta_{T}}(\rvx) = \Tilde{q}_{T-1} / p \approx \Tilde{q}_{T} / p$ (assuming the change in $\theta$ in the final training step is small), we can perform sample refinement in exactly the formalism of DG$f$low by running the same Eq. \ref{eq:sde_euler_maruyama} for a further $\kappa$ steps. %
We provide pseudocode for sampling in Algorithm~\ref{alg:sampling} and justify the two-stage sampling scheme with ablations in Sec.~\ref{sec:ablations}. 

\begin{algorithm}
\caption{Training}\label{alg:training}
\begin{algorithmic}

\Repeat
    \State Sample $\rvx_p \sim p(\rvx), \rvx_{0} \sim q'(\rvx)$ 
    \For {$j \gets 1,K $}
        \State Obtain $\rvx_K$ from $\rvx_0$ by simulating Eq.~\ref{eq:sde_euler_maruyama}.
    \EndFor
    
    \State Update $\theta$ according to
    \begin{equation*}
    \begin{multlined}[c]
         \nabla_\theta [ g'(r_\theta(\rvx_p))r(\rvx_p) - g(r_\theta(\rvx_p)) - g'(r_\theta(\rvx_K))]
    \end{multlined}
    \end{equation*}
\Until{converged}
\end{algorithmic}
\end{algorithm}
\begin{algorithm}
\caption{Sampling}\label{alg:sampling}
\begin{algorithmic}
\State Sample $\rvx_{0} \sim q'(\rvx)$
\For {$j \gets 1,K+\kappa $}
    \State Obtain $\rvx_{K+\kappa}$ from $\rvx_0$ by simulating Eq.~\ref{eq:sde_euler_maruyama}.
\EndFor 
\State \Return $\rvx_{K+\kappa}$
\end{algorithmic}
\end{algorithm}

\begin{figure*}[h]
     \centering
     \begin{subfigure}[b]{0.32\textwidth}
         \centering
         \includegraphics[width=\textwidth]{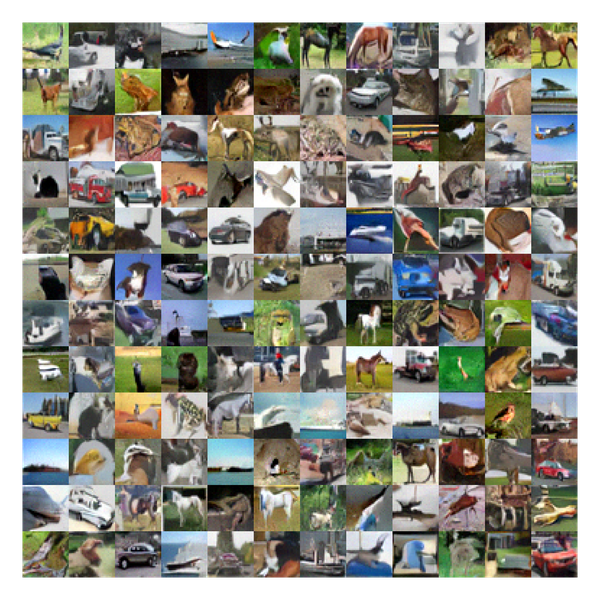}
         \label{fig:cifar-lsif-pearson}
     \end{subfigure}
     \hfill
     \begin{subfigure}[b]{0.32\textwidth}
         \centering
         \includegraphics[width=\textwidth]{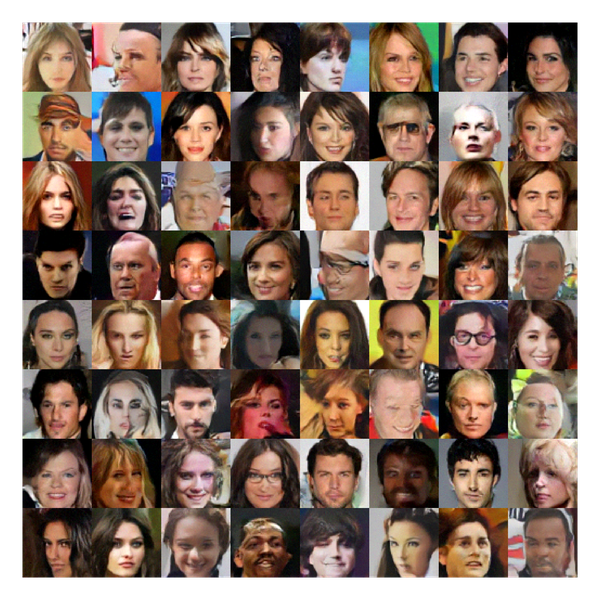}
         \label{fig:celeba-lsif-pearson}
     \end{subfigure}
     \hfill
     \begin{subfigure}[b]{0.32\textwidth}
         \centering
         \includegraphics[width=\textwidth]{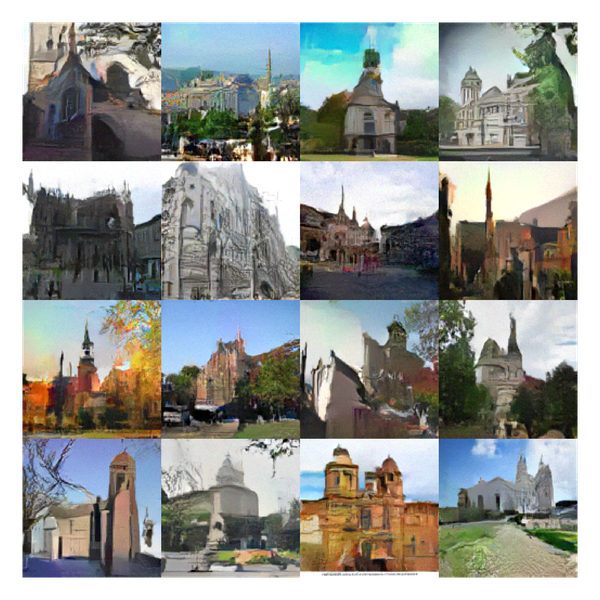}
         \label{fig:lsun-lsif-pearson}
     \end{subfigure}
        \caption{Samples from \modelname-DDP on CIFAR10 $32^2$, CelebA $64^2$ and LSUN Church $128^2$ using LSIF-$\chi^2$. More results using various BD objectives and $f$-divergences can be found in Appendix. }
        \label{fig:main_result}
\end{figure*}

\section{Related Works}\label{sec:related_works}
Gradient flows are a general framework for constructing the steepest descent curve of a given functional, and consequently have been used in optimizing a variety of distance metrics, ranging from the $f$-divergence \cite{gao2019deep,gao2022deep,ansari2020refining,fan2021variational}, maximum mean discrepancy \cite{arbel2019maximum,mroueh2021convergence}, Sobolev distance \cite{mroueh2019sobolev} and related forms of the Wasserstein distance \cite{liutkus2019sliced}.  

Two well-known methodologies to simulate Wasserstein gradient flows are population density approaches such as the JKO scheme \cite{jordan1998variational} and particle-based approaches. The former requires computation of the 2-Wasserstein distance and a free energy functional, which are generally intractable. As such, several works have proposed approximations leveraging Brenier's theorem ~\cite{mokrov2021large}, Fenchel conjugate~\cite{fan2021variational} and neural forward and backward schemes~\cite{altekruger2023neural}. 

On the other hand, particle approaches involve simulating the SDE Eq.~\ref{eq:sde}. DG$f$low leverages GAN discriminators as a stale density ratio estimator for sample refinement. Our method is inspired by the stale formulation of DG$f$low, but we extend it to the general case of sample generation. In fact, DG$f$low is a specific instance of \modelname{} where we fix the prior $q'$ to be the implicit distribution defined by the GAN generator. Euler Particle Transport (EPT) \cite{gao2022deep} similarly trains a deep density ratio estimator using the Bregman divergence. However, EPT was only shown to be effective in the latent space and thus necessitated the training of an additional generator network, whereas our method is able to scale directly in the data space. Interestingly, recent work \cite{franceschi2024unifying} has shown that particle approaches like \modelname{}, which train networks to estimate certain functionals (e.g., density ratio or score~\cite{song2019generative}), can be unified with methods that explicitly train generator networks such as GANs~\cite{goodfellow2014generative}. This view supports the methodology of \modelname{}, where the density ratio estimator is trained as a density discriminator but used as a generator during inference.

Another related line of work are energy-based models (EBMs), which adopt Langevin dynamics to sample from an unnormalized distribution \cite{lecun2006tutorial,xie2016theory}. The training scheme of \modelname{} bears similarity with that of Short-run EBM \cite{nijkamp2019learning}, where a sample is drawn by restarting the Langevin chain and running $K$ gradient steps. However, \modelname{} is fundamentally distinct from EBMs; the former is trained with density ratio estimation, while the latter is trained with maximum likelihood. Further, it can be shown that Langevin dynamics is in fact a special case of the KL gradient flow \cite{jordan1998variational,liu2019understanding}.

\section{Experiments}\label{sec:exp}
The goal of our experiments is to compare FDRL against gradient flow baselines. However, for completeness, we also include results from other modeling approaches, including EBMs, GANs, and Diffusion models.

\subsection{Setup}
We test \modelname{} with unconditional generation on 32$\times$32 CIFAR10, 64$\times$64 CelebA and 128$\times$128 LSUN Church datasets, class-conditional generation on CIFAR10 and image-to-image translation on 64$\times$64 Cat2dog dataset. We label experiments with the DDP as \modelname{}-DDP, and ablation experiments with a uniform prior as \modelname{}-UP. We use modified ResNet architectures for all experiments in this study. See Appendix \ref{app:exp_details} for more details. 

\subsection{Unconditional Image Generation}\label{sec:image_gen}

\begin{table}
\centering
\caption{FID scores for 32$\times$32 CIFAR10 and 64$\times$64 CelebA. Lower is better.}
\label{tab:fid}
\resizebox{\linewidth}{!}{%
\begin{tblr}{
  width = \linewidth,
  colspec = {Q[640]Q[150]Q[120]},
  cell{1}{2} = {c},
  cell{3}{2} = {c},
  cell{3}{3} = {c},
  cell{4}{2} = {c},
  cell{4}{3} = {c},
  cell{5}{2} = {c},
  cell{5}{3} = {c},
  cell{6}{2} = {c},
  cell{6}{3} = {c},
  cell{7}{2} = {c},
  cell{7}{3} = {c},
  cell{9}{2} = {c},
  cell{9}{3} = {c},
  cell{10}{2} = {c},
  cell{10}{3} = {c},
  cell{12}{2} = {c},
  cell{12}{3} = {c},
  cell{13}{2} = {c},
  cell{13}{3} = {c},
  cell{15}{2} = {c},
  cell{15}{3} = {c},
  cell{16}{2} = {c},
  cell{16}{3} = {c},
  cell{17}{2} = {c},
  cell{17}{3} = {c},
  cell{19}{2} = {c},
  cell{19}{3} = {c},
  cell{20}{2} = {c},
  cell{20}{3} = {c},
  cell{21}{2} = {c},
  cell{21}{3} = {c},
  cell{22}{2} = {c},
  cell{22}{3} = {c},
  cell{23}{2} = {c},
  cell{23}{3} = {c},
  cell{24}{2} = {c},
  cell{24}{3} = {c},
  hline{1-2,8,11,14,18,25} = {-}{},
  rowsep=-1pt
}
\textbf{Method} & \textbf{CIFAR10} & \textbf{CelebA}\\
\textbf{\textit{EBM-based}} &  & \\
IGEBM~\cite{du2019implicit} & 40.58 & -\\
SR-EBM~\cite{nijkamp2019learning} & 44.50 & 23.02\\
LEBM~\cite{pang2020learning} & 70.15 & 37.87\\
CoopNets~\cite{xie2018cooperative} & 33.61 & -\\
VAEBM~\cite{xiao2020vaebm} & 12.19 & 5.31\\
\textbf{\textit{Diffusion-based}} &  & \\
NCSN~\cite{song2019generative} & 25.32 & 26.89\\
DDPM~\cite{ho2020denoising} & 3.17 & -\\
\textbf{GAN-based} &  & \\
WGAN-GP~\cite{gulrajani2017improved} & 55.80 & 30.00\\
SNGAN~\cite{miyato2018spectral} & 21.70 & -\\
\textbf{\textit{Gradient flow-based}} &  & \\
EPT~\cite{gao2019deep} & 24.60 & -\\
JKO-Flow~\cite{fan2021variational} & 23.70 & -\\
$\ell$-SWF~\cite{du2023nonparametric} & 59.70 & 38.30\\
\textbf{\textit{Ours}} &  & \\
FDRL-DDP (LSIF-$\chi^2$) & 22.28 & 17.77\\
FDRL-DDP (LR-KL) & 22.61 & 18.09\\
FDRL-DDP (LR-JS) & 22.80 & -\\
FDRL-DDP (LR-logD) & 22.82 & -\\
FDRL-UP (LSIF-$\chi^2$ & 30.23 & -\\
FDRL-UP (LR-KL) & 31.99 & -
\end{tblr}
}
\end{table}

In Fig. \ref{fig:main_result}, we show uncurated samples of \modelname{}-DDP on different combinations of $g$ and $f$-divergences, where the combinations are chosen due to numerical compatibility (see Appendix \ref{sec:app_pairing}). Visually, our model is able to produce high-quality samples on a variety of datasets up to resolutions of $128\times128$, surpassing existing gradient flow techniques~\cite{gao2022deep, fan2021variational}. Samples with other $f$-divergences can be found in Appendix \ref{app:ddp_samples}. In Table \ref{tab:fid}, we show the FID scores of \modelname-DDP in comparison with relevant baselines utilizing different generative approaches. 

We emphasize that our experiments are geared towards comparisons with gradient flow baselines, in which \modelname{}-DDP outperforms both EPT and JKO-Flow, as well as the non-parametric method $\ell$-SWF. Our method outperforms all baseline EBMs except VAEBM, which combines a VAE and EBM for improved performance. We also outperform WGAN and the score-based NCSN, while matching SNGAN's performance. Comparing method approaches, modern diffusion models (DDPM) outperform EBM and gradient flow methods, including \modelname{}. Interesting future work could investigate reasons for this gap, which may lead to improvements for EBMs and gradient flows.

To intuit the flow process, we show how samples evolve with the LSUN Church dataset in Fig. \ref{fig:gradient_flow_process} of the appendix, where the prior contains low frequency features such as the rough silhouette of the building, and the flow process generates the higher frequency details to create a realistic image.

We also visualize samples obtained by interpolating in the prior space for CelebA in Fig. \ref{fig:latent_space_interpolation} of the appendix. The model smoothly interpolates in the latent space despite using the DDP, indicating a semantically relevant latent representation that is characteristic of a valid generative model. Finally, to verify that our model has not merely memorized the dataset, we show nearest neighbor samples of the generated images in the training set in Appendix \ref{app:nn}. The samples produced by \modelname-DDP are distinct from their closest training samples,
telling us that \modelname-DDP is capable of generating new and diverse samples beyond the training data.

\subsection{Ablations}\label{sec:ablations}

\begin{figure}
    \centering
    \includegraphics[width=\linewidth, trim={0cm 0.5cm 0cm 0cm},clip]{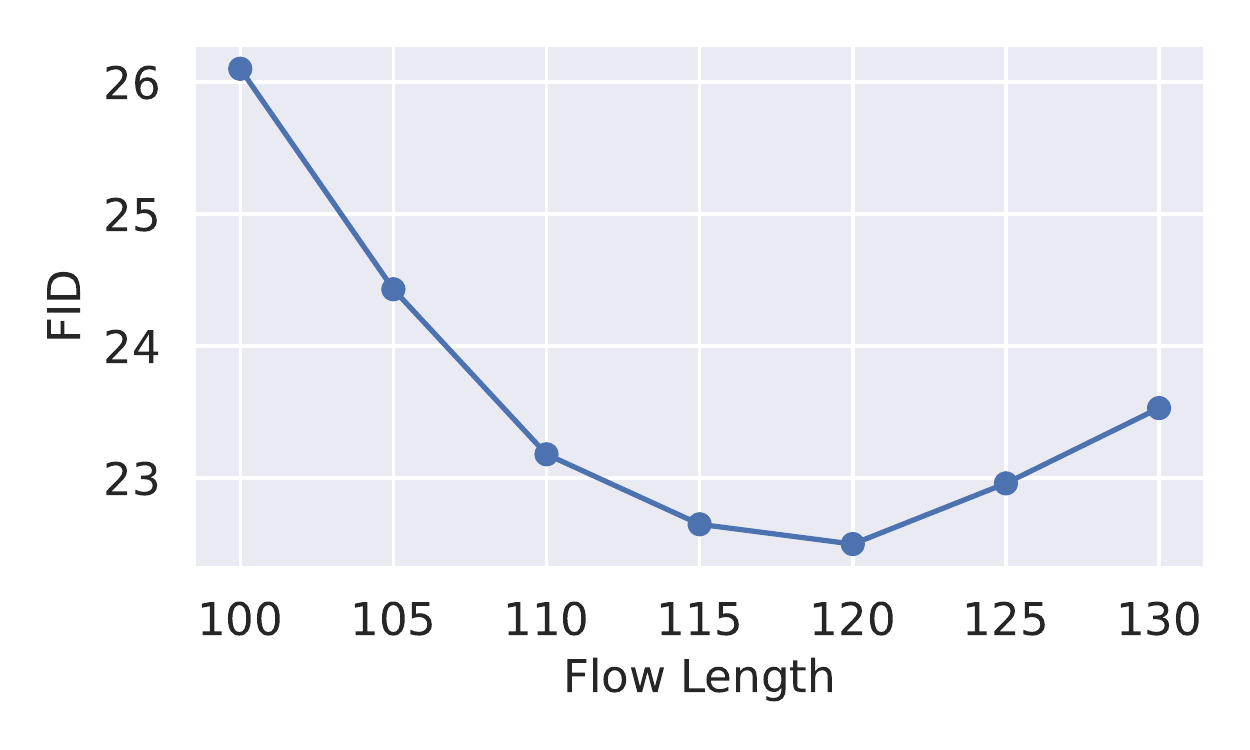}
    \caption{FID as a function of the total flow length on CIFAR10 for LSIF-$\chi^2$ DDP when sampling from a model trained with $K=100$.}
    \label{fig:k_vs_fid}
\end{figure}
\paragraph{Data-Dependent Prior vs Uniform Prior}
We motivate the use of the data-dependent prior in conjunction with \modelname{} by conducting ablations with $q'$ being a uniform prior (UP), $\rvx_{0} \sim U[-1,1]$. We train UP runs for 20$\%$ more training steps than DDP runs, but keep all other hyperparameters unchanged for control. We include qualitative samples of \modelname-UP in Fig. \ref{fig:up_samples} in the appendix. Table \ref{tab:fid} reports the quantitative scores. Visually, \modelname-UP produces diverse and appealing samples even with a uniform prior. However, the quantitative scores are significantly poorer than using the DDP, which validates our hypothesis in Sec. \ref{sec:ddp}.

\paragraph{Length of Flow vs Image Quality}
Fig.~\ref{fig:k_vs_fid} illustrates the relationship between total flow length and FID. Although the model is trained with $K=100$,
sampling with $K$ steps at testing does not produce the best image quality, as sampling with an additional $\kappa=20$ steps yields the best FID score. This supports the two-stage sampling approach discussed in Sec.~\ref{sec:conv}. First, $K=100$ steps of the flow are run to sample images from $\Tilde{q}_T(\rvx_K)$, followed by $\kappa=20$ additional steps of the same Eq. \ref{eq:sde_euler_maruyama} using the estimator $r{\theta_T} \approx \Tilde{q}_T/p$ for sample refinement, as done in DG$f$low.

\subsection{Conditional Image Generation}
\begin{figure}
    \centering
    \includegraphics[width=\linewidth]{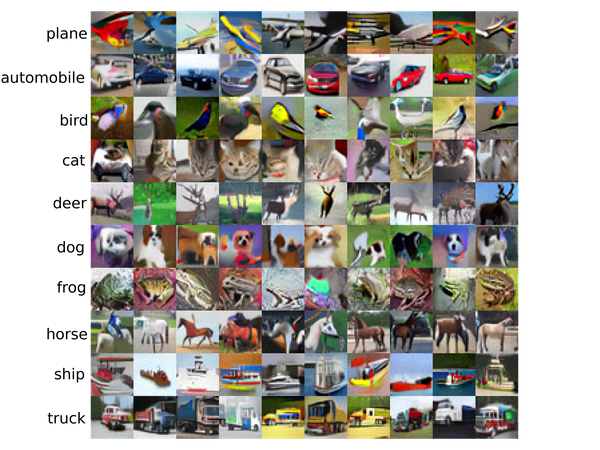}
    \caption{Class-conditional samples obtained by composing an unconditional \modelname{} with a pretrained robust classifier.}
    \label{fig:class_conditional}
\end{figure}
We can sample from target densities different from what \modelname{} was trained on by simply composing density ratios. Consider a multiclass classifier which classifies a given image into one of $N$ classes. We show in Appendix \ref{app:classifier} that we can express such classifiers as a density ratio $p(y=n|\rvx) = N^{-1} p(\rvx|y=n)/p(\rvx)$. Thus, we can obtain a conditional density ratio estimator $r_\theta(\rvx|y=n)=q_{T}(\rvx)/p(\rvx|y=n)$ by composing our unconditional estimator $r_{\theta}(\rvx)$ with the classifier output (see Appendix \ref{app:classifier}):
\begin{equation}
    r_\theta(\rvx|y=n) = \frac{1}{N} r_\theta(\rvx)p(y=n|\rvx)^{-1}.
\end{equation}
When $r_\theta(\rvx|y=n)$ is used in the flow in Eq. \ref{eq:sde_euler_maruyama}, we perform class-conditional generation. This is conceptually similar to the idea proposed in \cite{song2020score, dhariwal2021diffusion}, where an unconditional score model $\nabla_\rvx \log p_t(\rvx(t))$ is composed with a time-dependent classifier $\nabla_\rvx \log p_t(y|\rvx(t))$ to form a class-conditional model. However, whereas \cite{song2020score} requires separately training a time-dependent classifier, our formulation allows the use pretrained classifiers off-the-shelf, with \emph{no further retraining}. Inspired by earlier work on image synthesis with robust classifiers~\cite{santurkar2019image}, we found that using a pretrained adversarially-robust classifier was necessary in obtaining useful gradients for generation. We show our results in Fig. \ref{fig:class_conditional}, where each row represents conditional samples of each class in the CIFAR10 dataset.

\subsection{Unpaired Image-to-image 
Translation}\label{sec:im2im_translation}
\begin{figure}
    \centering
    \includegraphics[width=0.87\linewidth]{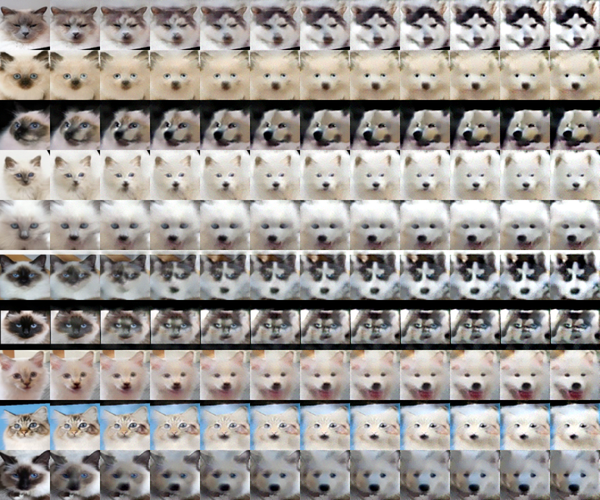}
    \caption{Image-to-image translation process from cat to dog images using \modelname{}.}
    \label{fig:im2im_cat2dog}
\end{figure}

\modelname{} can also be seamlessly applied to unpaired image-to-image-translation (I2I). We simply fix the prior distribution $q'(\rvx)$ to a source image domain and $p(\rvx)$ to a target domain. We then train the model in exactly the same manner as unconditional generation using Algorithm \ref{alg:training}. 

The I2I model is tested on the Cat2dog dataset~\cite{lee2018diverse}. From Fig. \ref{fig:im2im_cat2dog}, \modelname{} is able to smoothly translate images of cats to dogs while preserving relevant features, such as the background colors, pose and facial tones. For instance, a cat with light fur is translated to a dog with light fur. Quantitatively, the I2I baseline CycleGAN~\cite{zhu2017unpaired} achieves better FID scores than \modelname{} (Appendix Table \ref{tab:im2im-fid}). However, like many I2I methods~\cite{lee2018diverse, choi2020stargan,zhao2021unpaired, nie2021controllable}, CycleGAN relies on specific inductive biases, such as dual generators and discriminators, and cycle-consistency loss. Future work could explore incorporating these biases into \modelname{} to improve translation performance.

\section{Conclusion}
We propose \modelname{}, a method that enables the stale approximation of the gradient flow to be applied to generative modeling of high-dimensional images. Beyond unconditional image generation, the \modelname{} framework can be seamlessly adapted to other key applications, including class-conditional generation and unpaired image-to-image translation. Future work could focus on theoretically characterizing the exact distribution induced by the stale SDE of Eq.~\ref{eq:sde_stale} and further improving our understanding of its convergence properties. 

\section{Acknowledgements}
This research is supported by the National Research Foundation Singapore and DSO National Laboratories under the AI Singapore Programme (AISG Award No: AISG2-RP-2020-016).

{\small
\bibliographystyle{ieee_fullname}
\bibliography{refs}
}

\clearpage
\newpage

\appendix
\begin{figure*}
    \centering
    \includegraphics[width=\textwidth]{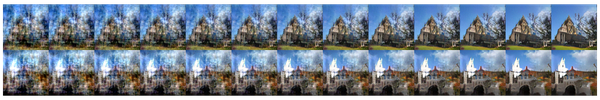}
    \caption{Illustration of the flow process for LSUN Church, starting from a sample from the data-dependent prior on the leftmost column.}
    \label{fig:gradient_flow_process}
\end{figure*}
\begin{figure*}
    \centering
    \includegraphics[width=\textwidth]{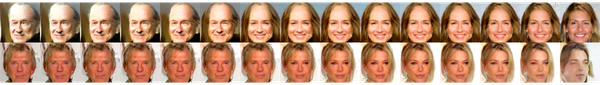}
    \caption{Interpolation results between leftmost and rightmost samples with CelebA.}
    \label{fig:latent_space_interpolation}
\end{figure*}

\section{Proofs}
\mainlemma*
\begin{proof}
The proof draws inspiration from the Langevin equation. Therefore, let us first consider the standard overdamped Langevin SDE:
\begin{equation}\label{eq:langevin}
    d\rvx_t = -\nabla f(\rvx_t)dt + \sqrt{2}d\rvw_t
\end{equation}
where $f$ is Lipschitz. The associated Fokker-Planck equation is given by~\cite{risken1985fokker}
\begin{equation}\label{eq:fpe}
    \partial_t \rho_t = \nabla \cdot (\rho \nabla f) + \Delta \rho.
\end{equation}
where $\Delta$ is the Laplace operator. It is straightforward to show that the stationary distribution of Eq.~\ref{eq:langevin} is given by
\begin{equation}\label{eq:stationary}
    \rho_\infty(\rvx) = \frac{e^{-f(\rvx)}}{\int e^{-f(\rvx)}d\rvx} = \frac{e^{-f(\rvx)}}{Z}.
\end{equation}
We simply verify by substituting Eq.~\ref{eq:stationary} into Eq.~\ref{eq:fpe}:
\begin{align*}
\partial_t \rho_\infty & =  \nabla \cdot \left(\frac{e^{-f(\rvx)}}{Z}  \nabla f \right) + \nabla \cdot \nabla \left(\frac{e^{-f(\rvx)}}{Z}\right) \\ & =
\nabla \cdot \left(\frac{e^{-f(\rvx)}}{Z} \nabla f\right) - \nabla \cdot \left(\frac{e^{-f(\rvx)}}{Z} \nabla f\right) \\ &=0.
\end{align*}
Now, let us return to the stale estimate SDE of Eq.~\ref{eq:sde_stale} with $\gamma=1$ and $q(\rvx)\sim U[a,b]=\frac{1}{b-a}=C$:
\begin{equation}\label{eq:app_sde_stale}
    d\rvx_t = -\nabla f'(C/p(\rvx_t))dt + \sqrt{2}d\rvw_t.
\end{equation}
Since $f$ is convex and twice-differentiable, $f'$ is a monotonically increasing function. Using what we showed above, the stationary distribution of Eq.~\ref{eq:app_sde_stale} is thus
\begin{equation}
    \rho_\infty(\rvx) = \frac{e^{-f'(C/p(\rvx))}}{\int e^{-f(C/p(\rvx))}d\rvx} = \frac{e^{-f'(C/p(\rvx))}}{Z}.
\end{equation}
Then we can show that
\begin{align*}
& \argmax_\rvx \rho_\infty(\rvx) \\ 
&= \argmax_\rvx \frac{e^{-f'(C/p(\rvx))}}{Z} \\ 
&= \argmax_\rvx e^{-f'(C/p(\rvx))} \\
&= \argmax_\rvx -f'(C/p(\rvx)) && \text{($e$ is monotonically increasing)} \\
&= \argmin_\rvx f'(C/p(\rvx)) \\
&= \argmin_\rvx C/p(\rvx) && \text{($f'$ is monotonically increasing)} \\
&= \argmax_\rvx p(\rvx) && \text{($p(\rvx) > 0$)}.
\end{align*}
\end{proof}

\section{Toy Datasets}
\begin{figure*}[h]
    \centering
    \includegraphics[width=\textwidth, trim={6.5cm 0 5cm 0},clip]{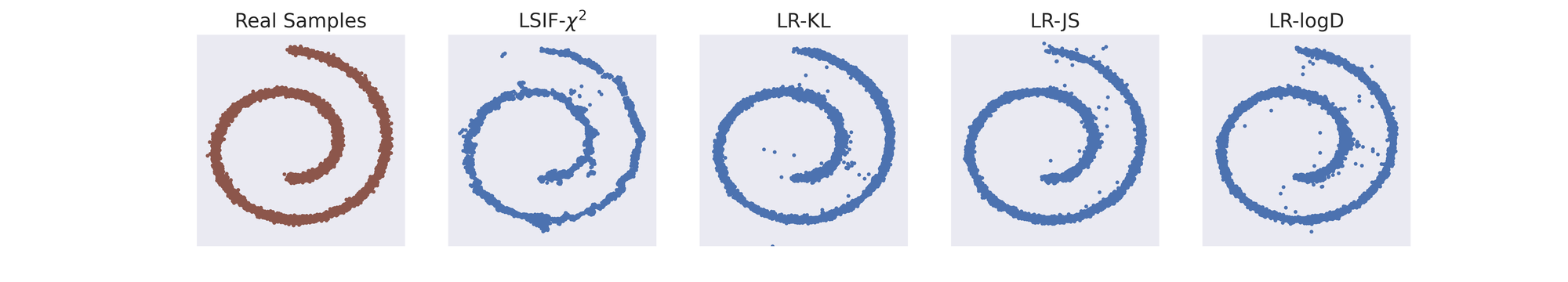}
    \caption{Comparison of different \modelname{} pairings of Bregman and $f$-divergences on the 2DSwissroll dataset.}
    \label{fig:swissroll}
\end{figure*}
To affirm that samples generated by \modelname{} indeed converge to the target distribution, we train \modelname{} on the synthetic 2DSwissroll dataset. The density ratio estimator is parameterized by a simple feedforward multilayer perceptron. We train the model to flow samples from the prior $q'(\rvx)=\mathcal{N}(\mathbf{0},\mathbb{I})$ to the target distribution, which we sample from the \texttt{make\_swiss\_roll} function in \texttt{scikit-learn}. We plot the results in Fig. \ref{fig:swissroll}, from which we can see that the model indeed recovers $p(\rvx)$ successfully for all combinations of $f$ and $g$. 

\section{Bregman Divergence and $f$-divergence Pairing}\label{sec:app_pairing}

We elucidate the full forms of the Bregman divergences here. The LSIF objective $g(y)=\frac{1}{2}(y-1)^2$ is given as
\begin{equation}\label{eq:lsif}
    \mathcal{L}_{LSIF}(\theta) = \frac{1}{2}\mathbb{\widehat{E}}_p[r_\theta(\rvx)]^2 - \mathbb{\widehat{E}}_{q_\tau} [r_\theta(\rvx)],
\end{equation}
while the LR objective $g(t) =y\log y - (1+y)\log(1+y) $ is given as
\begin{equation}\label{eq:lr}
    \mathcal{L}_{LR}(\theta) = -\mathbb{\widehat{E}}_p \left[ \log \frac{1}{1+r_\theta(\rvx)} \right] - \mathbb{\widehat{E}}_{q_\tau} \left[ \log \frac{r_\theta(\rvx)}{1+r_\theta(\rvx)} \right].
\end{equation}

When computing the LR objective, we find that we run into numerical stability issues when letting $r_\theta(\rvx)$ be the output of an unconstrained neural network and subsequently taking the required logarithms in Eq. \ref{eq:lr}. To circumvent this issue, we let $r_\theta(\rvx)$ be expressed as the exponential of the neural network's output, i.e., the output of the neural network is $\log r_\theta(\rvx)$. This formulation naturally lends itself to the  flow of the KL, JS and logD divergences, whose first derivatives $f'$ that is required in Eq. \ref{eq:sde_euler_maruyama} are also logarithmic functions of $r_\theta(\rvx)$, as seen from Table. \ref{tab:f-div}. We can thus utilize numerically stable routines in existing deep learning frameworks, avoiding the need for potentially unstable operations like exponentiations (see Appendix \ref{sec:app_stable} for details). As such, we pair LR with the aforementioned divergences and abbreviate the combinations as LR-KL, LR-JS, LR-logD. We did not run into such stability issues for the LSIF objective (Eq. \ref{eq:lsif}) as the model learns to automatically output a non-negative scalar over the course of training, hence for LSIF we allow the neural network to estimate $r_\theta(\rvx)$ directly and pair it with the Pearson-$\chi^2$ divergence. We abbreviate this pairing as LSIF-$\chi^2$.

\section{Stable Computation of LR and $f$-divergences}\label{sec:app_stable}
As mentioned in Sec. \ref{sec:app_pairing}, computing the logarithm of unconstrained neural networks leads to instabilities in the training process. This is a problem when computing the LR objective in Eq. \ref{eq:lr} and the various first derivatives of $f$-divergences. We can circumvent this by letting $r_\theta(\rvx)$ be expressed as the exponential of the neural network and use existing stable numerical routines to avoid intermediate computations that lead to the instabilities (for example, computing logarithms and exponentials directly). Let us express the neural network output as $NN_\theta(\rvx) \triangleq \log r_\theta(\rvx)$. The LR objective can then be rewritten as

\begin{align}
    \mathcal{L}_{LR}(\theta) &= -\mathbb{\widehat{E}}_p \left[ \log \frac{1}{1+r_\theta(\rvx)} \right] - \mathbb{\widehat{E}}_{q_\tau} \left[ \log \frac{r_\theta(\rvx)}{1+r_\theta(x)} \right] \\
    &= -\mathbb{\widehat{E}}_p \left[ \texttt{LS}(-NN_\theta(x))\right] - \mathbb{\widehat{E}}_{q_\tau} \left[ \texttt{LS}(NN_\theta(x)) \right]
\end{align}
where $\texttt{LS}(x)=\log \frac{1}{1+\exp (-x)}$, the log-sigmoid function, which has stable implementations in modern deep learning libraries.

Similarly for the $f$-divergences whose first derivatives involve logarithms, we can calculate them stably as
\begin{align}
    f'_{KL}(r_\theta(\rvx)) &= \log r_\theta(\rvx) + 1 = NN_\theta(\rvx) + 1\\
    f'_{JS}(r_\theta(\rvx)) &= \log \frac{2r_\theta(\rvx)}{1+r_\theta(\rvx)} = \log 2 + \texttt{LS}(NN_\theta(\rvx)) \\
    f'_{logD}(r_\theta(\rvx)) &= \log (r_\theta(\rvx)+1) + 1 = -\texttt{LS}(-NN_\theta(\rvx)) + 1.
\end{align}

\begin{table*}[]
\centering
\caption{$f$-divergences and their first derivatives $f' $.}
\label{tab:f-div}
\begin{tabular}{lll}
\hline
$f$-divergence   & \multicolumn{1}{c}{$f$}               & \multicolumn{1}{c}{$f'$} \\ \hline
Pearson-$\chi^2$ & $(r-1)^2$                             & $2(r-1)$                 \\
KL               & $r \log r$                            & $\log r + 1$             \\
JS               & $r \log r - (r+1) \log \frac{r+1}{2}$ & $\log \frac{2r}{r+1}$    \\
log D            & $(r+1) \log (r+1) - 2 \log 2 $        & $\log (r+1) + 1$         \\ \hline
\end{tabular}
\end{table*}

\section{Classifiers are Density Ratio Estimators}\label{app:classifier}
To perform class-conditional generation in the \modelname{} framework, we would like to estimate the density ratio of a certain class over the data distribution: $p(\rvx|y=n)/p(\rvx)$. With Bayes rule, we can write this as 
\begin{align}
    \frac{p(\rvx|y=n)}{p(\rvx)} &= \frac{p(y=n|\rvx)p(\rvx)/p(y=n)}{p(\rvx)} \\
    &= \frac{p(y=n|\rvx)}{p(y=n)}.
\end{align}
The denominator term $p(y=n)$ can be viewed as a constant, e.g. assume the $N$ classes are equally distributed, then $p(y=n) = 1/N$. Therefore, we have that the class probability given by the softmax output of a classifier is actually a density ratio:
\begin{equation}
    Np(y=n|\rvx) = \frac{p(\rvx|y=n)}{p(\rvx)}.
\end{equation}
We can use this equation to convert an unconditional \modelname{} to a class-conditional generator. Recall the stale approximation of the gradient flow:
\begin{equation}\label{eq:recall_sde}
    d\rvx_t = -\nabla_\rvx f'(r_\theta(\rvx_t))dt + \sqrt{2\gamma}d\rvw_t
\end{equation}
Consider the case where we have a trained unconditional \modelname{} estimator $r_{\theta}$. We can multiply the inverse of the classifier output with $r_\theta(\rvx_t) = \Tilde{q}_{\tau}(\rvx_t)/p(\rvx_t)$ to get a density ratio between $\Tilde{q}_{\tau}(\rvx_t)$ and the conditional data distribution $p(\rvx_t|y=n)$:
\begin{align*}
    r_\theta(\rvx_t) p(y=n|\rvx_t)^{-1} &= \frac{\Tilde{q}_\tau(\rvx_t)}{p(\rvx_t)} \frac{Np(\rvx_t)}{p(\rvx_t|y=n)} \\ &= N\frac{\Tilde{q}_\tau(\rvx_t)}{p(\rvx_t|y=n)}.
\end{align*}
That is, we took our unconditional model and converted it to a conditional generative model by composing it with a pretrained classifier. To get the correct class-conditional density ratio that can be used for generation, we should therefore compute 
\begin{equation}
    r_\theta(\rvx_t|y=n) = \frac{1}{N} r_\theta(\rvx_t)p(y=n|\rvx_t)^{-1}
\end{equation}
and use this conditional density ratio estimator in our sampling method of Algorithm Eq.~\ref{alg:sampling}.

\section{Experimental Details}\label{app:exp_details}
\begin{table*}[]
\caption{Network structures for the density ratio estimator $r_\theta(\rvx)$.}
\label{tab:archi}
\begin{minipage}{.24\textwidth}
\begin{tabular}{c}
CIFAR10                \\ \hline
3$\times$3 Conv2d, 128 \\
3 $\times$ ResBlock 128           \\
ResBlock Down 256      \\
2 $\times$ ResBlock 256           \\
ResBlock Down 256      \\
2 $\times$ ResBlock 256           \\
ResBlock Down 256      \\
2 $\times$ ResBlock 256           \\
Global Mean Pooling    \\
Dense $\rightarrow$ 1  \\ \hline
\end{tabular}
\end{minipage}
\begin{minipage}{.24\textwidth}
\begin{tabular}{c}
CelebA 64             \\ \hline
3$\times$3 Conv2d, 64 \\
ResBlock Down 64      \\
ResBlock Down 128     \\
ResBlock 128          \\
ResBlock Down 256     \\
ResBlock 256          \\
ResBlock Down 256     \\
ResBlock 256          \\
Global Mean Pooling   \\
Dense $\rightarrow$ 1 \\ \hline
\end{tabular}
\end{minipage}
\begin{minipage}{.24\textwidth}
\begin{tabular}{c}
LSUN Church 128       \\ \hline
3$\times$3 Conv2d, 64 \\
ResBlock Down 64      \\
ResBlock Down 128     \\
ResBlock Down 128     \\
ResBlock 128          \\
ResBlock Down 256     \\
ResBlock 256          \\
ResBlock Down 256     \\
ResBlock 256          \\
Global Mean Pooling   \\
Dense $\rightarrow$ 1 \\ \hline
\end{tabular}
\end{minipage}
\begin{minipage}{.24\textwidth}
\begin{tabular}{c}
Cat2dog 64            \\ \hline
3$\times$3 Conv2d, 64 \\
ResBlock Down 64      \\
ResBlock Down 128     \\
Self Attention 128    \\
ResBlock Down 128     \\
ResBlock Down 256     \\
Global Mean Pooling   \\
Dense $\rightarrow$ 1 \\ \hline
\end{tabular}
\end{minipage}
\end{table*}

\paragraph{Unconditional Image Generation.}
For all datasets, we perform random horizontal flip as a form of data augmentation and normalize pixel values to the range [-1, 1]. For CelebA, we center crop the image to 140$\times$140 before resizing to 64$\times$64. For LSUN Church, we resize the image to 128$\times$128 directly. We use the same training hyperparameters across all three datasets, which is as follows. We train all models with 100000 training steps with the Adam optimizer with a batch size of 64, excepet for CIFAR10 with uniform prior where we trained for 120000 steps. We use a learning rate of $1\times10^{-4}$ and decay twice by a factor of 0.1 when there are 20000 and 10000 remaining steps. We set the number of flow steps to $K=100$ at training time. When sampling, we set $\kappa=20$ for all DDP experiments and $\kappa=10$ for UP experiments. Other flow hyperparameters include a step size of $\eta=3$ and noise factor $\nu=10^{-2}$. The specific ResNet architectures are given in Table \ref{tab:archi}. We update model weights using an exponential moving average~\cite{song2020improved} given by $\bm{\theta}' \leftarrow m\bm{\theta}' + (1-m)\bm{\theta}_i$, where $\bm{\theta}_i$ is the parameters of the model at the $i$-th training step, and $\bm{\theta}'$ is an independent copy of the parameters that we save and use for evaluation. We set $m=0.998$. We use the LeakyReLU activation for our networks with a negative slope of 0.2. We experimented with spectral normalization and self attention layers for unconditional image generation, but found that training was stable enough such that they were not worth the added computational cost. The FID results in Table \ref{tab:fid} are obtained by generating 50000 images, and testing the results against the training set for both CIFAR10 and CelebA.

\paragraph{Conditional Generation with Robust Classifier.}
The unconditional model used for conditional generation is the same model obtained from the section above. The pretrained robust classifier checkpoint is obtained from the \texttt{robustness}\footnote{https://github.com/MadryLab/robustness} Python library~\cite{robustness}. It is based on a ResNet50 architecture and is trained with $L_2$-norm perturbations of $\varepsilon=1$. 

We choose the LR-KL variant for our results in Fig. \ref{fig:class_conditional}. This means that our conditional flow is given by
\begin{align}
    \rvx_{k+1} &= \rvx_{k} - 2\eta \nabla_{\rvx}\log\left(r_\theta(\rvx_{k}) *  \frac{1}{N} p(y=n|\rvx_{k})^{-1}\right) + \nu\bm{\xi}_{k} \\
    &= \rvx_{k} - 2\eta \nabla_{\rvx}\left( \log r_\theta(\rvx_{k}) -  \phi \log p(y=n|\rvx_{k})\right) + \nu\bm{\xi}_{k}
\end{align}
where in the second line we introduce $\phi$ as a parameter that scales the magnitude of the classifier's gradients so they are comparable to the magnitude of \modelname's gradients. We use $\phi=0.1$.

\paragraph{Unpaired Image-to-image Translation.}
The Cat2dog dataset contains 871 Birman cat images and 1364 Samoyed and Husky dog images. 100 of each are set aside as test images. We first resize the images to 84$
\times$84 before center cropping to 64$\times$64. Due to the relatively small size of the dataset, we use a shallower ResNet architecture as compared to CelebA $64^2$ despite the same resolution (Table \ref{tab:archi}) to prevent overfitting. We also utilize spectral normalization and a self attention layer at the 128-channel level to further boost stability. We set $K=100$ during training and $K=110$ at test time. As we observed the model tends to diverge late in training, we limit the number of training steps to 40000, with a decay factor of 0.1 applied to the learning rate at steps 20000 and 30000. All other hyperparameters are kept identical to the experiments on unconditional image generation. We report results for LSIF-$\chi^2$, although we have experimented with LR-KL and found performance to be similar. The FID result in Table. \ref{tab:im2im-fid} is obtained by translating the 100 test cat images, and testing the results against the 100 test dog images.

\begin{table*}
\centering
\caption{FID scores for image-to-image translation with the Cat2dog dataset.}
\label{tab:im2im-fid}
\begin{tabular}{lc}
\hline
Model                     & FID $\downarrow$ \\ \hline
\modelname{} & 108.10           \\
CycleGAN                  & 51.79            \\ \hline
\end{tabular}
\end{table*}

\begin{figure*}
     \centering
     \begin{subfigure}[b]{0.32\textwidth}
         \centering
         \includegraphics[width=\textwidth]{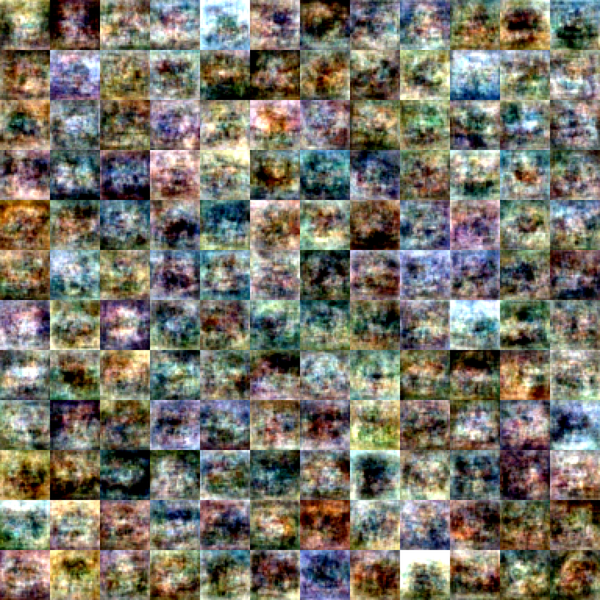}
         \caption{}
         \label{fig:cifar10_learned_prior}
     \end{subfigure}
     \hfill
     \begin{subfigure}[b]{0.32\textwidth}
         \centering
         \includegraphics[width=\textwidth]{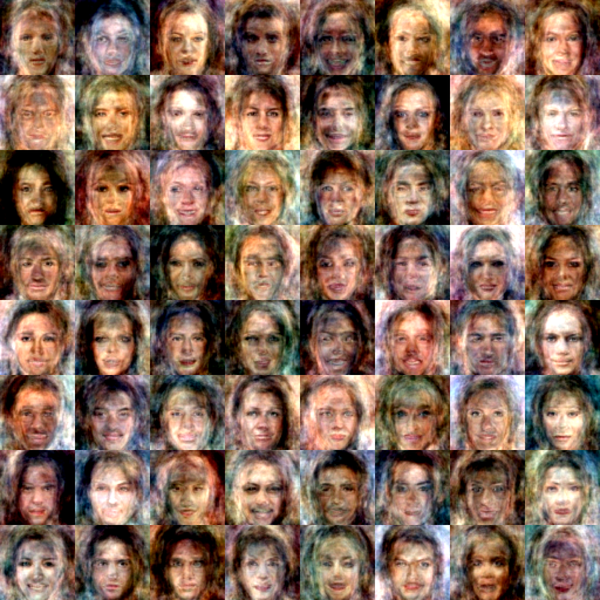}
         \caption{}
         \label{fig:celeba_learned_prior}
     \end{subfigure}
    \begin{subfigure}[b]{0.32\textwidth}
         \centering
         \includegraphics[width=\textwidth]{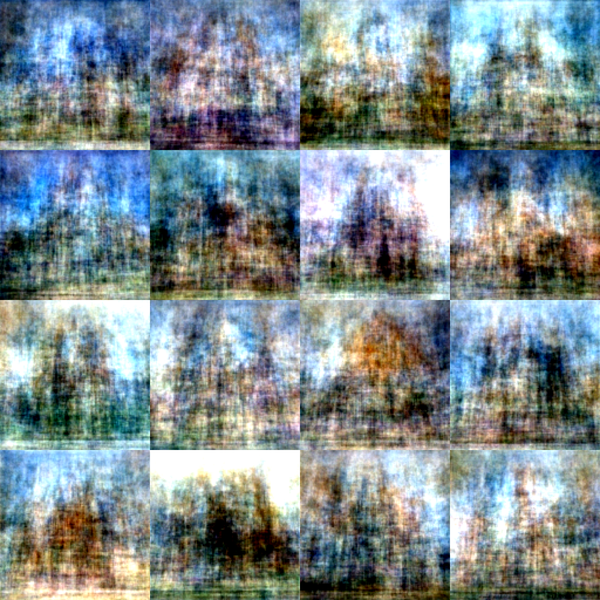}
         \caption{}
         \label{fig:lsun_church_learned_prior}
     \end{subfigure}
        \caption{Samples drawn from the data-dependent priors of (a) CIFAR10 $32^2$, (b) CelebA $64^2$ and (c) LSUN Church $128^2$.}
        \label{fig:learned_prior}
\end{figure*}

\clearpage
\newpage
\onecolumn

\section{Nearest Neighbors}\label{app:nn}
\begin{figure}[!htb]
    \label{fig:nn_cifar}
     \centering
     \begin{subfigure}[b]{0.49\textwidth}
         \centering
         \includegraphics[width=\textwidth]{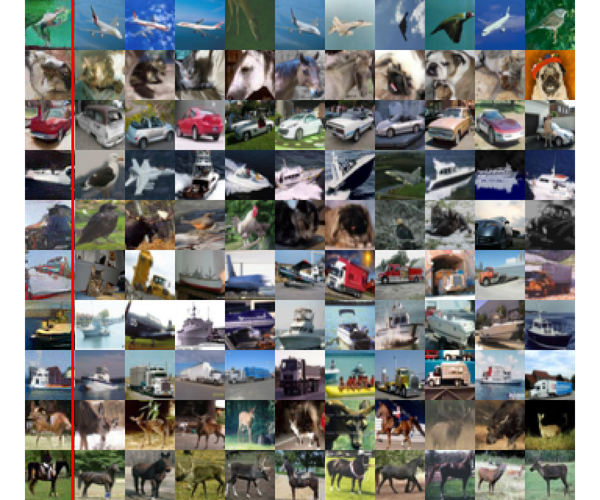}
         \caption{}
     \end{subfigure}
     \hfill
     \begin{subfigure}[b]{0.49\textwidth}
         \centering
         \includegraphics[width=\textwidth]{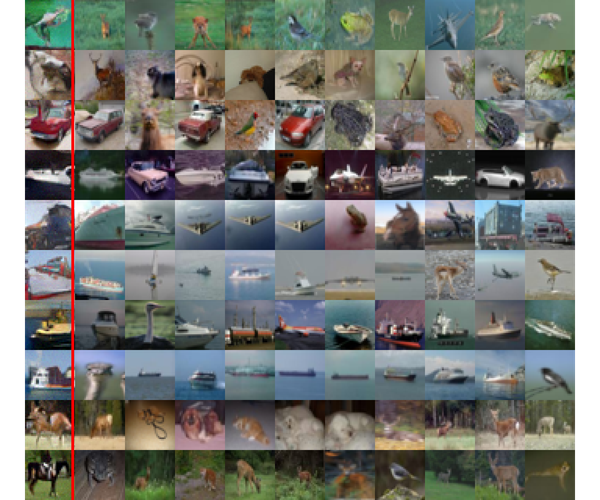}
         \caption{}
     \end{subfigure}
    \caption{Nearest neighbor images for CIFAR10 as measured by $L_2$ distance in (a) the feature space of an Inception V3 network pretrained on ImageNet and (b) data space. The column to the left of the red line are samples from \modelname{} LSIF-$\chi^2$. The images to the right of the line are the 10 nearest neighbors in the training dataset.}
\end{figure}

\begin{figure}[!htb]
    \label{fig:nn_celeba}
     \centering
     \begin{subfigure}[b]{0.49\textwidth}
         \centering
         \includegraphics[width=\textwidth]{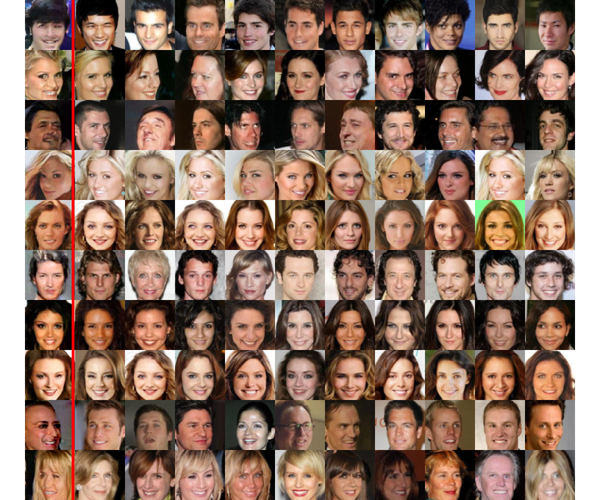}
         \caption{}
     \end{subfigure}
     \hfill
     \begin{subfigure}[b]{0.49\textwidth}
         \centering
         \includegraphics[width=\textwidth]{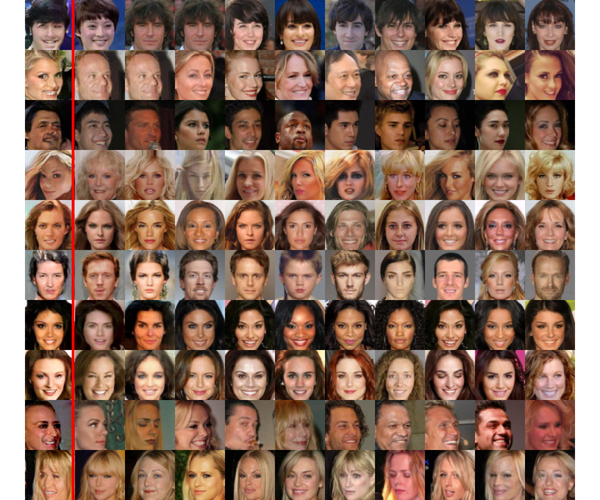}
         \caption{}
     \end{subfigure}
    \caption{Nearest neighbor images for CelebA as measured by $L_2$ distance in (a) the feature space of an Inception V3 network pretrained on ImageNet and (b) data space. The column to the left of the red line are samples from \modelname{} LSIF-$\chi^2$. The images to the right of the line are the 10 nearest neighbors in the training dataset.}
\end{figure}

\clearpage
\newpage

\section{Uncurated Samples \modelname-DDP}\label{app:ddp_samples}

\begin{figure}[!htb]
    \centering
    \includegraphics[width=\textwidth, trim={0, 0cm, 0, 0cm}, clip]{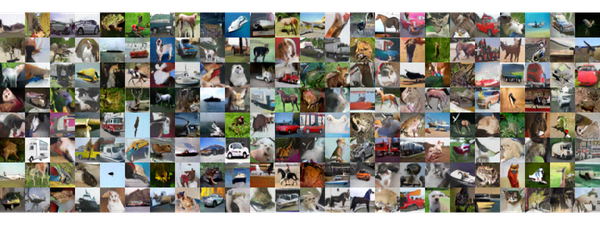}
    \caption{Uncurated samples of CIFAR10 LSIF-$
    \chi^2$.}
    \label{fig:full_LSIF_Pearson_cifar10}
\end{figure}

\begin{figure}[!htb]
    \centering
    \includegraphics[width=\textwidth,trim={0, 0cm, 0, 0cm}, clip]{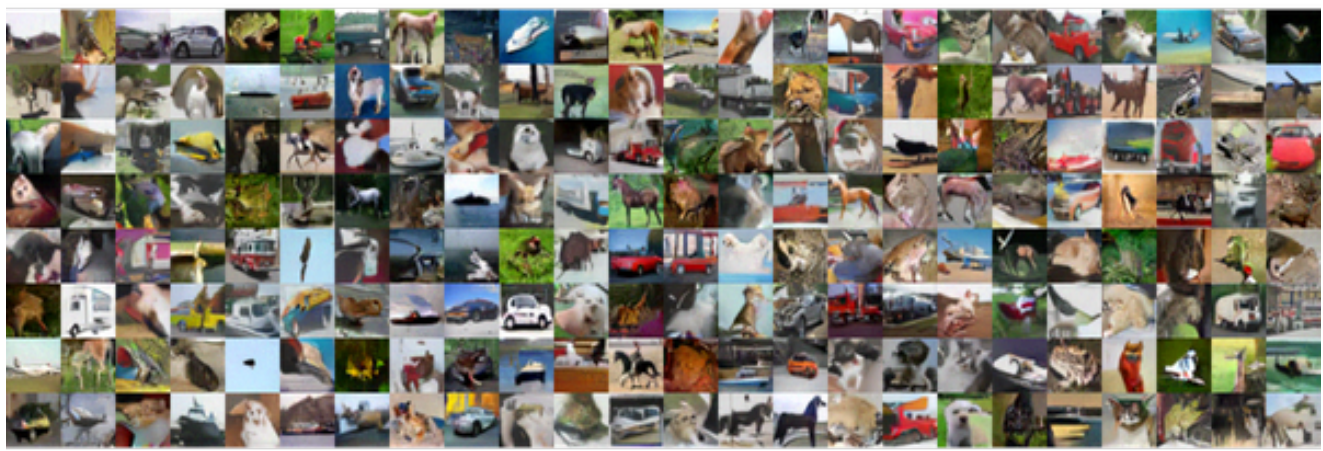}
    \caption{Uncurated samples of CIFAR10 LR-KL.}
    \label{fig:full_LR_KL_cifar10.png}
\end{figure}

\begin{figure}[!htb]
    \centering
    \includegraphics[width=\textwidth, trim={0, 0cm, 0, 0cm}, clip]{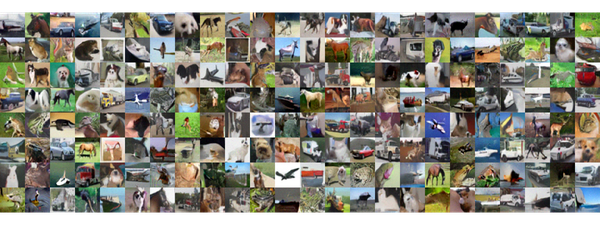}
    \caption{Uncurated samples of CIFAR10 LR-JS.}
    \label{fig:full_cifar_lr_js}
\end{figure}

\begin{figure}[!htb]
    \centering
    \includegraphics[width=\textwidth, trim={0, 0cm, 0, 0cm}, clip]{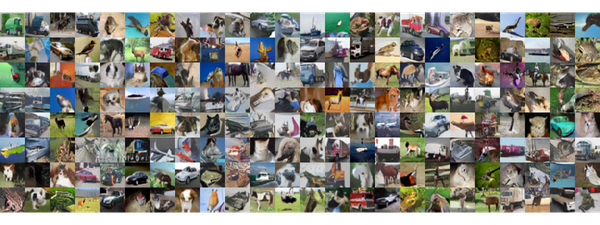}
    \caption{Uncurated samples of CIFAR10 LR-logD.}
    \label{fig:full_cifar_lr_logd}
\end{figure}

\begin{figure}[!htb]
    \centering
    \includegraphics[width=\textwidth, trim={0, 0cm, 0, 0cm}, clip]{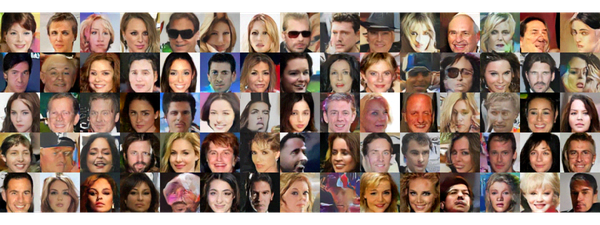}
    \caption{Uncurated samples of CelebA LSIF-$\chi^2$.}
    \label{fig:full_celeba_lsif_pearson}
\end{figure}

\begin{figure}[!htb]
    \centering
    \includegraphics[width=\textwidth, trim={0, 0cm, 0, 0cm}, clip]{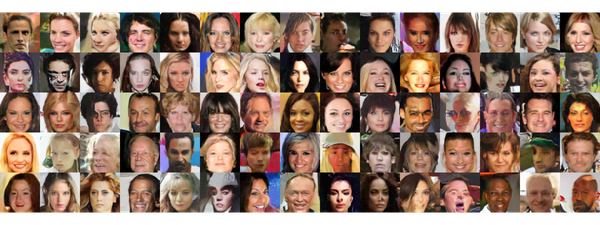}
    \caption{Uncurated samples of CelebA LR-KL.}
    \label{fig:full_celeba_lr_kl}
\end{figure}

\begin{figure}[!htb]
    \centering
    \includegraphics[width=\textwidth, trim={0, 0cm, 0, 0cm}, clip]{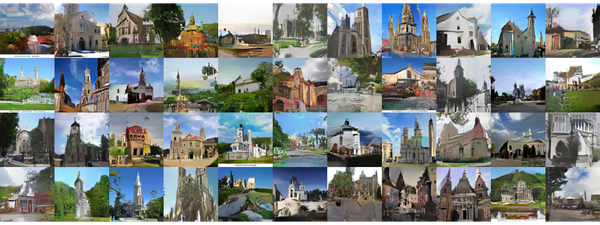}
    \caption{Uncurated samples of LSUN Church LSIF-$\chi^2$.}
    \label{fig:full_church_lsif_pearson}
\end{figure}

\clearpage
\newpage

\section{Uncurated Samples \modelname-UP}\label{app:up_samples}
\begin{figure}[!htb]
     \centering
     \begin{subfigure}[b]{0.49\textwidth}
         \centering
         \includegraphics[width=\textwidth]{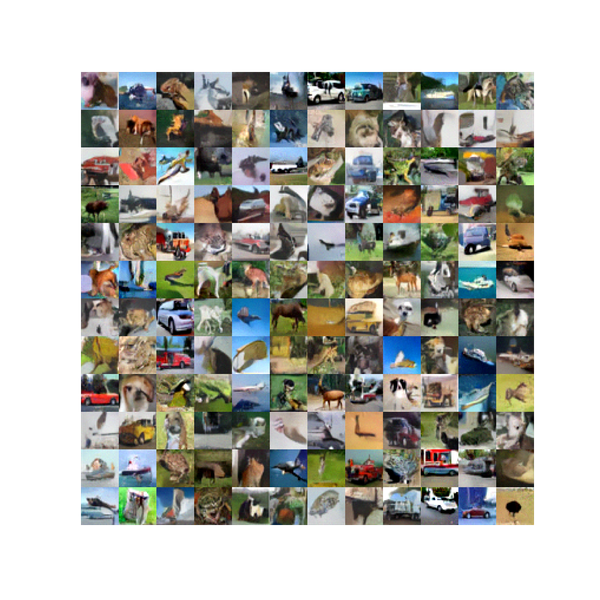}
         \caption{LSIF-$\chi^2$ uniform prior.}
         \label{fig:cifar10_up_lsif_pearson}
     \end{subfigure}
     \hfill
     \begin{subfigure}[b]{0.49\textwidth}
         \centering
         \includegraphics[width=\textwidth]{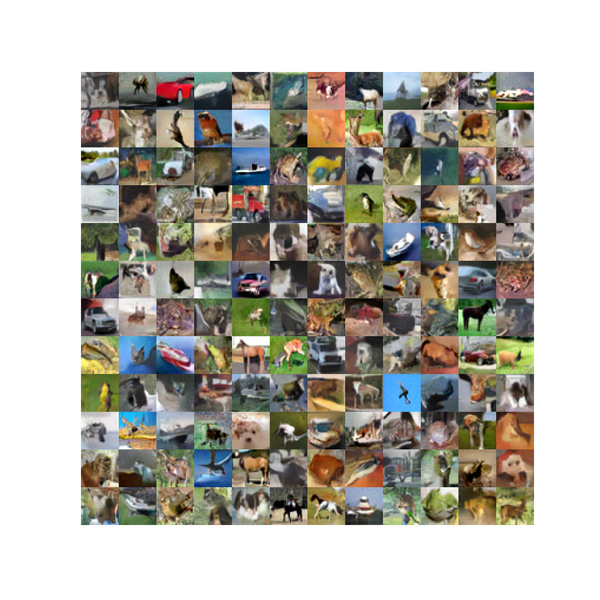}
         \caption{LR-KL uniform prior.}
         \label{fig:cifar10_up_lr_kl}
     \end{subfigure}
        \caption{Uncurated CIFAR10 samples with \modelname-UP.}
        \label{fig:up_samples}
\end{figure}

\end{document}